
\documentclass{article} 
\usepackage{RobustML_iclr2021_conference_new,times}


\usepackage{amsmath,amsfonts,bm}









\def\eqref#1{equation~\ref{#1}}









\def\1{\bm{1}}










\DeclareMathAlphabet{\mathsfit}{\encodingdefault}{\sfdefault}{m}{sl}
\SetMathAlphabet{\mathsfit}{bold}{\encodingdefault}{\sfdefault}{bx}{n}











\newcommand{\E}{\mathbb{E}}

\newcommand{\R}{\mathbb{R}}

\newcommand{\KL}{D_{\mathrm{KL}}}



\DeclareMathOperator*{\argmin}{arg\,min}


\usepackage[hidelinks]{hyperref}
\usepackage{url}
\usepackage{comment}

\usepackage{times}  
\usepackage{helvet} 
\usepackage{courier}  
\usepackage{graphicx} 
\urlstyle{rm} 
\usepackage{natbib}  
\usepackage{caption} 
\frenchspacing  
\setlength{\pdfpagewidth}{8.5in}  
\setlength{\pdfpageheight}{11in}  

\usepackage{amsmath}
\usepackage[italicdiff]{physics}



\usepackage{mathtools}

\DeclarePairedDelimiter{\diagfences}{(}{)}
\newcommand{\diag}{\operatorname{diag}\diagfences}

\usepackage{amsfonts}
\usepackage{booktabs}
\usepackage{siunitx}
\usepackage{color}
\usepackage{amsthm}
\newtheorem{theorem}{Theorem}
\newtheorem{lemma}[theorem]{Lemma}
\newtheorem{corollary}[theorem]{Corollary}

\usepackage{subfigure}



\usepackage{amsmath}

\usepackage{mathtools}
\newcommand{\MI}{\operatorname{MI}\diagfences}

\usepackage{bbm}

\DeclareMathOperator\supp{supp}

\usepackage{mathtools}
\newcommand\SmallMatrix[1]{{%
		\tiny\arraycolsep=0.3\arraycolsep\ensuremath{\begin{pmatrix}#1\end{pmatrix}}}}

\usepackage{amsthm}
\newtheorem{definition}[theorem]{Definition}


\usepackage{longtable}
\usepackage{booktabs} 
\usepackage{multirow} 

\usepackage{mathrsfs} 

\usepackage[ruled,vlined,linesnumbered]{algorithm2e}

\usepackage{wrapfig}
\usepackage{tikz}
\usetikzlibrary{arrows.meta,positioning}

\usepackage{adjustbox}
\usepackage{array}

\newcolumntype{R}[2]{%
    >{\adjustbox{angle=#1,lap=\width-(#2)}\bgroup}%
    l%
    <{\egroup}%
}

\usepackage{enumitem}
\newlist{eqlist}{enumerate*}{1}
\setlist[eqlist]{itemjoin=\quad,mode=unboxed,label=(\roman*),ref=\theequation(\roman*)}

\usepackage{enumitem}

\usepackage[normalem]{ulem}

\usetikzlibrary{backgrounds}

\usepackage[para]{footmisc}

\setcounter{secnumdepth}{3} 

%




\title{Continual Invariant Risk Minimization}
\iclrfinalcopy


\author{Francesco Alesiani 
\thanks{
\url{http://www.neclab.eu}, \texttt{francesco.alesiani@neclab.eu}
} \\
NEC Laboratories Europe\\
Kurfuerstenanlage 36,\\
D-69115 Heidelberg, DE \\
\And
Shujian Yu 
\thanks{
\texttt{s.yu3@vu.nl}
} \\
Computer Science Department \\
Vrije Universiteit of Amsterdam\\
1081 HV Amsterdam, NL\\
\And
Mathias Niepert \\
University of Stuttgart \\
Universitätsstraße 32 \\
D-70569 Stuttgart, DE \\
}

\begin{document}

\maketitle

\begin{abstract}

Empirical risk minimization can lead to poor generalization behavior on unseen environments if the learned model does not capture invariant feature representations. Invariant risk minimization (IRM) is a recent proposal for discovering environment-invariant representations. IRM was introduced by \cite{arjovsky2019invariant} and extended by \cite{ahuja2020invariant}. IRM assumes that all environments are available to the learning system at the same time. With this work, we generalize the concept of IRM to scenarios where environments are observed sequentially. We show that existing approaches, including those designed for continual learning, fail to identify the invariant features and models across sequentially presented environments. We extend IRM under a variational Bayesian and bilevel framework, creating a general approach to continual invariant risk minimization. We also describe a strategy to solve the optimization problems using a variant of the alternating direction method of multiplier (ADMM). We show empirically using multiple datasets and with multiple sequential environments that the proposed methods outperform or is competitive with prior approaches.




\end{abstract}

\section{Introduction}

\noindent Empirical risk minimization (ERM) is the predominant principle for designing machine learning models.
In numerous application domains, however, the test data distribution can differ from the training data distribution. For instance, at test time, the same task might be observed in a different environment. Neural networks trained by minimizing ERM objectives over the training
distribution tend to generalize poorly in these situations.
Improving the generalization of learning systems has become a major research topic in recent years, with many different threads of research including, but not limited to,  robust optimization (e.g.,~\cite{hoffman2018algorithms}) and domain adaptation (e.g.,~\cite{johansson2019support}). Both of these research directions, however, have their own intrinsic limitations~(\cite{ahuja2020invariant}).
Recently, there have been proposals of approaches that learn environment-invariant representations. The motivating idea is that if the behavior of a model is invariant across environments, this makes it more likely that the model has captured a causal relationship between features and prediction targets. This in turn should lead to a better generalization behavior. Invariant risk minimization (IRM,~\cite{arjovsky2019invariant}),
which pioneered this idea, introduces a new optimization loss function to identify non-spurious causal feature-target interactions. Invariant risk minimization games (IRMG,~\cite{ahuja2020invariant}) expands on IRM from a game-theoretic perspective. 

The assumption of IRM and its extensions, however, is that all environments are available to the learning system at the same time, which is unrealistic in numerous applications. A learning agent experiences environments often sequentially and not concurrently. 
For instance, in a federated learning scenario with patient medical records, each hospital's (environment) data might be used to train a shared machine learning model that receives the data from these environments in a sequential manner.
The model might then be applied to data from an additional hospital (environment) unavailable at training time. 
Unfortunately, both IRM and IRMG are incompatible with such a continual learning setup in which the learner receives training data from environments presented sequentially. 
{\color{black} As already noted by \cite{javed2020learning}, ``IRM \cite{arjovsky2019invariant} requires sampling data from multiple environments simultaneously for computing a regularization term pertinent to its learning objective, where different environments are defined by intervening on one or more variables of the world." The same applies to IRMG (\cite{ahuja2020invariant})} 

To address the problem of learning environment-invariant ML models in sequential environments, we make the following contributions:
\begin{itemize}[nosep, wide, leftmargin=*]
\item We expand both IRM and IRMG under a Bayesian variational framework and develop novel objectives (for the discovery of invariant models) in two scenarios: (1) the standard multi-environment scenario where the learner receives training data from all environments at the same time; and (2) the scenario where data from each environment arrives sequentially.
\item We demonstrate that the resulting bilevel problem objectives have an alternative formulation, which allows us to compute a solution efficiently using the alternating direction method of multipliers (ADMM).
\item We compare our method to ERM, IRM, IRMG, and various continual learning methods (EWC, GEM, MER, VCL)   
on a diverse set of tasks, demonstrating comparable or superior performance in most situations.
\end{itemize}

\section{Background: Offline Invariant Risk Minimization}

We consider a multi-environment setting where, given a set of training environments $E =\{e_1,e_2,\cdots,e_m\}$, the goal is to find parameters $\theta$ that generalize well to unseen (test) environments. Each environment $e$ has an associated training data set $D_e$ and a corresponding risk $R^e$
\begin{equation}
R^e(w \circ \phi )\doteq E_{(x,y)\sim D_e} \ell_e((w \circ \phi)(x),y),
\end{equation}
where $f_\theta = w \circ \phi$ is the composition of a feature extraction function $\phi$ and a classifier (or regression function) $w$.
Empirical Risk Minimization (ERM) minimizes the average loss across all training examples, regardless of environment:
\begin{equation}
R_{\text{ERM}}(\theta)\doteq E_{(x,y)\sim \cup_{e\in E} D_e} \ell(f_\theta(x),y).
\end{equation}
ERM has strong theoretical foundations in the case of iid data~(\cite{vapnik1992principles}) but can fail dramatically when test environments differ significantly from training environments.
To remove spurious features from the model, Invariant Risk Minimization (IRM, ~\cite{arjovsky2019invariant}) instead aims to capture invariant representations $\phi$ such that the optimal classifier $w$ given $\phi$ is the same across all training environments. This leads to the following multiple bi-level optimization problem
\begin{equation} \label{eq:IRM}
	\min_{\phi \in H_\phi,w \in H_w} ~ \sum_{e \in E} R^e(w \circ \phi) ~~~ \ \ \ 
	\text{s.t. } ~ w \in \argmin_{w_e \in H_w} R^e(w_e \circ \phi), \forall e \in E,
\end{equation}
where $H_\phi,H_w$ are the hypothesis sets for, respectively, feature extractors and classifiers.  
Unfortunately, solving the IRM bi-level programming problem directly is difficult since solving the outer problem requires solving multiple dependent minimization problems jointly. We can, however, relax IRM to IRMv1 by fixing 
a scalar classifier and learning a representation $\phi$ such that the classifier is ``approximately locally optimal"~(\cite{arjovsky2019invariant})
\begin{eqnarray} \label{eq:IRMv1}
    \min_{\phi \in H_\phi} ~ \sum_{e \in E} R^e(\phi) + \lambda ||\nabla_{w|w=1.0} R^e(w \phi)||^2 , \forall e \in E,
\end{eqnarray}
where $w$ is a scalar evaluated in $1$ and $\lambda$ controls the strength of the penalty term on gradients on $w$.
Alternatively, the recently proposed Invariant Risk Minimization Games (IRMG)~(\cite{ahuja2020invariant}) proposes to learn an ensemble of classifiers with each environment controlling one component of the ensemble. Intuitively, the environments play a game where each environment's action is to decide its contribution to the ensemble aiming to minimize its risk. Specifically, IRMG optimizes the following objective:
\begin{equation} \label{eq:IRMG}
    \min_{\phi \in H_\phi} \sum_{e \in E} R^e(\bar{w} \circ \phi) ~~~ \ \ \ 
\text{s.t. } ~ w_e = \argmin_{w \in H_w} R^e \left( \frac1{|E|} (w+w_{-e})\circ \phi \right), \forall e \in E,
\end{equation}
where
$\bar{w} = \frac1{|E|} \sum_{e \in E} w_e$ is the average  and $w_{-e} = \sum_{e' \in E, e' \ne e} w_{e'}$ the complement classifier. 

\section{Continual IRM by Approximate Bayesian Inference}





Both IRM and IRMG assume the availability of training data from all environments at the same time, which is impractical and unrealistic in numerous applications. A natural approach would be to combine principles from IRM and continual learning. Experience replay, that is, memorizing examples of past environments and reusing them later, could be possible in some scenarios but it is often difficult to estimate a-priori the extent of replay necessary to achieve satisfactory generalization capabilities. Here, we propose to adopt a probabilistic approach, exploiting the propagation of the model distribution over environments using Bayes' rule. We integrate both IRM and IRMG with stochastic models, introducing their variational counterparts that admit a continual extension. In addition, our approach is justified by the property of the Kullback–Leibler (KL) divergence that promotes invariant distributions when used in sequential learning (as shown in Theorem~\ref{th:i-proj}). 




\subsection{Variational Continual Learning}


Following prior work in continual learning (\cite{nguyen2018variational}), let $D_t$ be the training data from the $t$-th environment $e^t$, let $D_1^t$ be the cumulative data up to the $t$-th environment, and let $\theta$ be the parameters of the feature extractor. 
When each environment is given sequentially, we can use Bayes' rule and we have (all proofs are provided in the supplementary material)
\begin{equation}
    p(\theta|D_1^t) \propto p(\theta|D_1^{t-1})p(D_t|\theta),
\end{equation}
that is, once we have the posterior distribution $p(\theta|D_{1}^{t-1})$ at time $t-1$, we can obtain, by applying Bayes rule, the posterior $p(\theta|D_{1}^{t})$ at time $t$ up to a normalization constant. This is achieved by multiplying the previous posterior with the current data likelihood $p(D_t|\theta)$.
%
The posterior distribution is in general not tractable and we use an approximation. With the variational approximation, $p(\theta|D_1^t) \approx q_t(\theta)$, it is thus possible to propagate the variational distribution from one environment to the next. From Corollary \ref{th:bayesian} (in the supplementary material) we can write the continual variational Bayesian inference objective as 
\begin{equation}
\label{eq:bayeian_main}
q_t (\theta) = \argmin_{q(\theta)}\E_{(x,y) \sim D_t} \E_{\theta \sim q(\theta)} \{\ell(y,f_\theta (x))\} +
\KL{\left(q (\theta) || q_{t-1}(\theta)\right)},
\end{equation}
from the variational distribution at step $q_{t-1}(\theta)$, with $f_\theta = w\circ\phi$, a function with parameters $\theta$. 

\subsection{Equivalent formulation of IRM as a Bilevel Optimization Problem (BIRM)}
In order to extend the IRM principle of Equation~\ref{eq:IRM} using the principle of approximate Bayesian inference,  
by applying Lemma \ref{th:reform} (in supplementary material), we first introduce the following new equivalent definition of IRM (equation~\ref{eq:IRM}). 
\begin{definition} [Bilevel IRM  (BIRM)] 
\label{def:BIRM}
Let $H_\phi$ be a set of feature extractors and let $H_w$ be the set of possible classifiers. An {\bf invariant predictor} $w \circ \phi$ on a set of environments $E$ is said to satisfy the Invariant Risk Minimization (IRM) property if it is the solution to the following bi-level Invariant Risk Minimization (BIRM)  problem
\begin{subequations}
\label{eq:BIRM}
\noindent\begin{minipage}[t]{.5\linewidth}
\begin{equation}
\label{eq:BIRMa}
 \min_{\phi \in H_\phi,w \in H_w} \sum_{e \in E} R^e(w \circ \phi) 
\end{equation}
\end{minipage}%
\begin{minipage}[t]{.5\linewidth}
\begin{equation}
\label{eq:BIRMb}
\text{s.t. \ \ }  \nabla_{w} R^e(w \circ \phi)=0, \forall e \in E.
\end{equation}
\end{minipage}
\end{subequations}
\end{definition}
This formulation results from substituting the minimization conditions in the constraint set of the original IRM formulation with the Karush–Kuhn–Tucker (KKT) optimality conditions. This new formulation allows us to introduce efficient solution methods and simplifies the conditions of IRM. It also justifies the IRMv1 model; indeed, when the classifier is a scalar value and the equality constraint is included in the optimization cost function, we obtain Equation~\ref{eq:IRMv1}.
To solve the BIRM problem, we propose to use the Alternating Direction Method of Multipliers (ADMM) (\cite{boyd2011distributed}). ADMM is an alternate optimization procedure that improves convergence and exploits the decomposability of the objective function and constraints. 
Details of the BIRM-ADMM algorithm are presented in the supplementary material. 

\subsection{Bilevel Variational IRM }

At this point, we cannot yet directly extend the IRM principle using variational inference. That is because if we observe all environments at the same time, the prior of the single environment is data independent. Therefore, we substitute $q_{t-1}(\theta)$ from Equation~\ref{eq:bayeian_main} with priors $p_\phi(\theta)$ and $p_w(\omega)$, where $\theta$ and $\omega$ are now the parameters of the two functions $\phi$ and $w$. We also substitute $q_t(\theta)$ with the variational distributions $q_\phi(\theta)$ and $q_w(\omega)$. 
\begin{definition} [Bilevel Variational IRM (BVIRM)] \label{def:BVIRM}
	Let $P_\phi$ be a family of distributions over feature extractors, and let $P_w$ be a family of distributions over classifiers. A {\bf variational invariant predictor} on a set of environments $E$ is said to satisfy Bilevel Variational Invariant Risk Minimization (BVIRM) if it is the solution to the following problem:
\begin{subequations}
\label{eq:BVIRM}
\noindent\begin{minipage}[t]{.5\linewidth}
\begin{equation}
 \mathop{ \min_{q_\phi \in P_\phi}}_{q_w \in P_w} \ \ \sum_{e \in E} Q_\phi^e(q_w,q_\phi)
\end{equation}
\end{minipage}%
\begin{minipage}[t]{.5\linewidth}
\begin{equation}
\text{s.t.} \ \ \nabla_{q_w}  Q_w^e(q_w,q_\phi) = 0, \forall e \in E,
\end{equation}
\end{minipage}
\end{subequations}
\begin{subequations} \label{eq:BVIRM_q}
	\begin{eqnarray}
	\text{with } && Q_\phi^e(q_w,q_\phi) =  \mathop{\E_{w \sim q_w}}_{\phi \sim q_\phi} R^e(w \circ \phi) + \beta \KL(q_\phi||p_\phi) {\color{black} + \beta \KL(q_{w}||p_{w})}, \\
	\text{and} && Q_w^e(q_w,q_\phi) =  \mathop{\E_{w \sim q_w}}_{\phi \sim q_\phi}  R^e(w \circ \phi) + \beta \KL(q_{w}||p_{w}),
\end{eqnarray}
\end{subequations}
	and where $p_{\phi}$ and $p_{w}$ are the priors of the two distributions. $\beta$ is a hyper-parameter  balancing the ERM and closeness to the prior.
\end{definition}
{\color{black} Definition \ref{def:BVIRM} extends Definition \ref{def:BIRM} with the objective of Eq.\ref{eq:bayeian_main}, where the parameters $\phi$ and $w$ are substituted by their distributions $q_\phi$ and $q_w$.}
The gradient of the cost in the inner problem is taken with respect to the distribution $q_w$. When we parameterize $q_\phi$ with $\theta$ and $q_w$ with $\omega$, the gradient is evaluated with respect to these parameters\footnote{Implementation detail using the mean field parameterization and reparametrization trick is provided in the Supplementary Material}, since the condition implies that the solution is locally optimal. If $Q(p,q)$ is convex in the first argument, then the solution is globally optimal. This definition  extends the IRM principle to the case where we use approximate Bayes inference, shaping the variational distributions $q_w$ and $q_\phi$, to be, in expectation, invariant and optimal across  environments. 



\subsection{The BVIRM ADMM Algorithm}
As noted for the BIRM definition, the solution of the variational BVIRM formulation can be obtained by using ADMM~(\cite{boyd2011distributed}). 
{\color{black} While in general there are no convergence results of ADMM methods for this problem, for local minima,} under proper conditions \footnote{These conditions are specific bounds on the magnitude and variance of the (sub-)gradients of the stochastic function (\cite{ouyang2013stochastic}). We used ELU $\in C^\infty$ in the experiments.}, the stochastic version of ADMM converges with rate $O(1/\sqrt{t})$ for convex functions and $O(\log t /t)$ for strongly convex functions (\cite{ouyang2013stochastic}).
We are now in the position to write the BVIRM-ADMM formulation of the BVIRM problem. 
{\color{black} ADMM is defined by the update Eq.\ref{eq:bvirm_admm_main}, where we denote with the apexes  $^-$ and $^+$ the value of any variable before and after the update.} 
Moreover, we abbreviate as follows $Q(\omega , \theta) = Q(q(\omega) , q(\theta))$.
\begin{subequations} \label{eq:bvirm_admm_main}
\begin{eqnarray} 
	\omega_e^+ &=& \argmin_{\omega_e} L_\rho(\omega_e , u_e^-,\omega^-,v_e^-) , \forall e \in E,\\
	\omega^+ &=& 1/|E| \sum_e (\omega_e+u_e) \label{line:sync}\\
	u_e^+ &=& u_e^- + (\omega_e^+ - \omega^+) \\
	v_e^+&=&v_e^- +  \nabla_{q(\omega)} Q_w^e(\omega^+_e , \theta)
\end{eqnarray}
\end{subequations}
with
\begin{eqnarray} 
\label{eq:bvirm_admm_lagrangian}
\hspace{-5mm} L_\rho(w_e,u_e,w,v_e) & = & Q_\phi^e (\omega_e, \theta) + \frac{\rho_0}{2}  \|\omega_e - \omega +u_e \|^2  +  \frac{\rho_1}{2}  \|  \nabla_{q(\omega)} Q_w^e(\omega_e \circ \phi) + v_e\|^2.
\end{eqnarray}
Here, $\phi$ is fixed and $\theta$ is updated in an external loop or given (e.g. the identity function). In the experiment we use stochastic Gradient Descent (SGD) to update both the model parameters $w_e$ and the feature extractor parameters $\phi$. The result follows by applying Lemma \ref{th:gadmm} in the supplementary material and substituting $x_i \gets \SmallMatrix{w_e \\ \phi }$, $ f_i(x_i) \gets Q_\phi^e (\omega_e, \theta)$ and $g_i(x_i) \gets \nabla_{q(\omega)} Q_w^e(\omega^+_e , \theta)$.
We provide a pseudo-code implementation leveraging  Equation~\ref{eq:bvirm_admm_main} as Algorithm~\ref{alg:BVIRM-ADMM}.
{\color{black}
One of the advantages of the ADMM formulation of BVIRM of Eq.\ref{eq:bvirm_admm_main}, is that it can be computed in parallel, where only Eq.\ref{line:sync} requires synchronization among environments, while the other steps can be computed independently.
}





\begin{minipage}[t]{0.48\textwidth}
\begin{algorithm}[H]
	\label{alg:BVIRM-ADMM}
	\SetAlgoLined
	\KwResult{$w \circ \phi$ : feature extraction and classifier for the environment $E$ }
	\tcp{Randomly initialize the variables}
	$\omega, \omega_e, u_e, v_e, \theta \gets \text{Init()} $ \;
	\tcp{Outer loop (on $\theta$) and Inner loop (on $\omega$) }
	\While{not converged}{
        \tcp{ Update $\phi$ using SGD}
        $\theta=  \text{SGD}_{\theta} (\sum_{e \in E} Q_\phi^e(q_w,q_\phi))$ \;
		\For {$k = 1, \dots, K$}{
			\For {$e \in E$}{
				$\omega_e = \text{SGD}_{\omega_e} L_\rho(\omega_e , u_e, \omega, v_e) $ \;
				$\omega = 1/|E| \sum_e (\omega_e+u_e)$ \;
				$u_e = u_e + (\omega_e - \omega)$ \;
				$v_e = v_e +  \nabla_{\omega} Q^e(\omega_e,  \theta)$ \;
			}
		}
	}
	\vspace{4mm}
	\caption{$w,\phi \gets $ BVIRM-ADMM($E,R^e$) ADMM version of the Bilevel Variational IRM Algorithm}
\end{algorithm}
\end{minipage}
\hfill
\begin{minipage}[t]{0.48\textwidth}
\begin{algorithm}[H]
	\label{alg:CL-BVIRM-ADMM}
	\SetAlgoLined
	\KwResult{$w_\omega \circ \phi-\theta$ : feature extraction and classifier for the environment $E$ }
	\tcp{Randomly initialize the variables}
	$\omega, \omega_e, u_e, v_e, \theta \gets \text{Init()} $ \;
	$\bar{\omega} = 0$ \;
	\For {$e \in E$}{
		\For {$k = 1, \dots, K$}{
	    $\theta=  \text{SGD}_{\theta} (Q_\phi^e(q_w,q_\phi))$ \;
		\While{not converged}{
			\tcp{ Update $\omega$ using SGD and ADMM}
			$\omega_e = \text{SGD}_{\omega_e} L_\rho(\omega_e , u_e, \omega, v_e) $ \;
			$\omega = 1/2 (\omega_e+u_e + \bar{\omega})$ \;
			$u_e = u_e + (\omega_e - \omega)$ \;
			$v_e = v_e +  \nabla_{\omega} Q^e(\omega_e,  \theta)$ \;
		}
		}
		$\bar{\omega} = \omega_e$ \;
	}
	\caption{$w,\phi \gets $ C-BVIRM-ADMM($E,R^e$) ADMM version of the Bilevel Variational IRM Algorithm}
\end{algorithm}
\end{minipage}

\medskip

\subsubsection{The Continual BVIRM ADMM Algorithm}

{\color{black} 
In presence of sequential environments, the priors for the new environment are given by the previous environment's distributions $q^-_\phi$ and $q^-_w$, this is obtained by comparing the BVIRM definition in Eqs.~(\ref{eq:BVIRM}) with the continual Bayesian learning Equation~(\ref{eq:bayeian_main}).
}
In Equation~\ref{eq:BVIRM_q} we thus now have $ Q_\phi^e(q_w,q_\phi) =  \mathop{\E}_{w \sim q_w, \phi \sim q_\phi} R^e(w \circ \phi) + \beta \KL(q_\phi||q^-_\phi) {\color{black}  + \beta \KL(q_{w}||q^-_{w}) }$ and $Q_w^e(q_w,q_\phi) =  \mathop{\E}_{w \sim q_w, \phi \sim q_\phi}  R^e(w \circ \phi) + \beta \KL(q_{w}||q^-_{w})$
Algorithm~\ref{alg:CL-BVIRM-ADMM} presents an example implementation of ADMM\footnote{{\color{black} In Algorithm 1 the ADMM update equation is implemented from line $6$ to line $9$, while in Algorithm 2, from line $7$ to line $10$.}} applied to the continual BVIRM formulation. 

\subsection{Information-Theoretic interpretation of C-BVIRM}

\begin{wrapfigure}[12]{R}{0.3\textwidth}
\vspace{-4mm}
\centering
\includegraphics[width=0.27\textwidth]{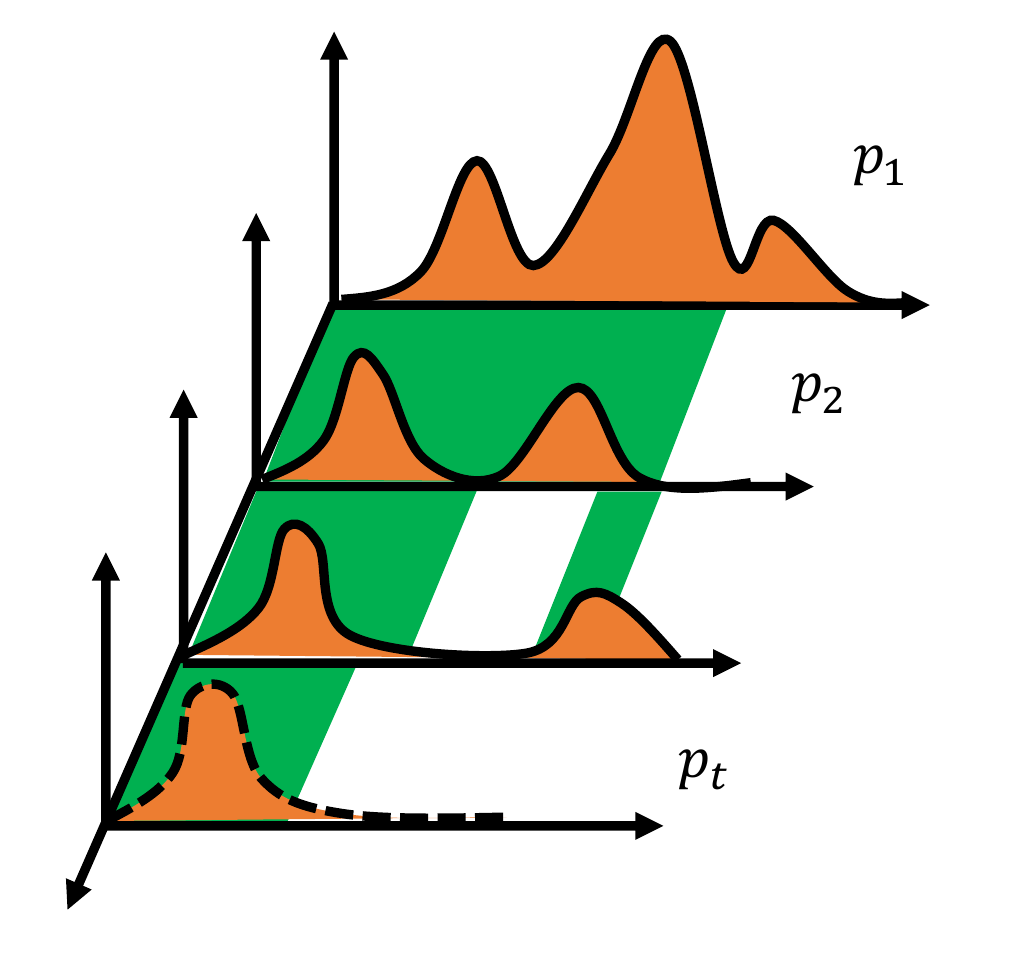}
\caption{\label{fig:invariant_distribution} Sequential projection of distributions $p_1,\dots p_t$, where { \color{black}  $p_{i+1}=\arg \min_{p \in P_{i+1}} \KL(p||p_{i})$ } }
\end{wrapfigure}
The KL divergence provides an additional motivation for the methods we propose. Indeed, for causal discovery \cite{peters2015causal} suggests a discovery mechanism for causal variables as the intersection of the invariant conditional distributions across environments subject to interventions. The KL divergence is asymmetric and only components present in the first argument distribution are evaluated. This implies that by using the KL divergence we can compute the intersection of the distributions, even when these are observed sequentially. This can be made more explicit by the property of the information projection \cite{cover1999elements}
\begin{theorem} [Information Projection] \label{th:i-proj}
If $P$ and $Q$ are two families of distributions with partially overlapping support, $ \emptyset \subset \supp (P) \bigcap \supp (Q) $, and $q \in Q$, then 
$$ p^* = \argmin_{p \in P} \KL(p||q)  \ \ \ \ \ \ \ \ \ \ \ \ \ \ \ \ \ \ \ \ \ \ \ \ \ \ \ \ \ \ \ \ \ \ \ \ \ \ \ \ $$
has support in the intersection for the support of $P$ and $q$, or $\supp (p^*) \subseteq \supp (P) \bigcap \supp (q) $.
\end{theorem}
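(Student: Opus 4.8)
The plan is to argue directly from the definition of the KL divergence, exploiting the fact that it is $+\infty$ whenever the first argument fails to be absolutely continuous with respect to the second. Write $\KL(p\|q)=\int p(x)\log\frac{p(x)}{q(x)}\,dx$ (a sum in the discrete case), with the standard convention that $\KL(p\|q)=+\infty$ as soon as there is a measurable set $A$ with $p(A)>0$ but $q(A)=0$, i.e. as soon as $\supp(p)\not\subseteq\supp(q)$. The claim $\supp(p^*)\subseteq\supp(P)\cap\supp(q)$ then splits into two inclusions that I would prove separately and intersect.

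The first inclusion, $\supp(p^*)\subseteq\supp(P)$, is immediate and needs no optimization: $p^*\in P$ by definition, and $\supp(P)$ is the union of the supports of the members of $P$, so $\supp(p^*)\subseteq\supp(P)$. The content of the theorem is therefore entirely in the second inclusion, $\supp(p^*)\subseteq\supp(q)$, which I would establish by contradiction using the minimality of $p^*$. The hypothesis $\emptyset\subset\supp(P)\cap\supp(Q)$ — together with the mild regularity implicitly required for the $\argmin$ to be attained at a finite value, e.g. $P$ being closed under restriction to and renormalization on $\supp(q)$, as holds for the mean-field variational families used in the paper — guarantees a competitor $\bar p\in P$ with $\supp(\bar p)\subseteq\supp(q)$, hence $\KL(\bar p\|q)<\infty$. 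Since $p^*$ is a minimizer, $\KL(p^*\|q)\le\KL(\bar p\|q)<\infty$. But if $\supp(p^*)\not\subseteq\supp(q)$, the convention above forces $\KL(p^*\|q)=+\infty$, a contradiction; hence $\supp(p^*)\subseteq\supp(q)$, and intersecting with the first inclusion gives the result.

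The hard part will be justifying the existence of a finite-divergence competitor $\bar p\in P$: "partially overlapping support" by itself does not literally hand us a member of $P$ whose support sits inside $\supp(q)$, so one must either strengthen the hypothesis or appeal to a structural property of the family $P$ (convexity and closedness in the style of Csisz\'ar's I-projection, or the explicit parameterization actually used in the experiments). Once that single point is granted, everything else is the routine infinite-penalty behaviour of KL, and in particular no Pythagorean identity, existence, or uniqueness result for the I-projection is needed for this support statement. I would state the regularity assumption explicitly at the start of the proof and then carry out the short contradiction argument above.
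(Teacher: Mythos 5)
Your argument is essentially the same as the paper's: the proof in the supplementary material likewise observes that $\KL(p\|q)$ is infinite unless $\supp(p)\subseteq\supp(q)$, and concludes from the non-empty intersection and the minimality of $p^*$ that the support lands in $\supp(P)\cap\supp(q)$. You are in fact more careful than the paper on the one genuinely delicate point --- the existence of a competitor $\bar p\in P$ with finite divergence, which the paper passes over with ``since the intersection is not null and $p$ is the result of an optimization'' --- so your explicit regularity assumption is a welcome sharpening rather than a deviation.
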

Therefore, if we have a sequence of sets of distributions of models from intervention environments and we compute the projection in sequence, the final projected distribution has support on the intersection of all previous distribution families, or $\supp (P_t) = \bigcap_{i=1}^{t} \supp (P_i)$ (see Figure~\ref{fig:invariant_distribution})
{\color{black}, since at each step $p_{i+1}=\arg \min_{p \in P_{i+1}} \KL(p||p_{i})$ }.


\section{Related Work}

\paragraph{Generalization} 
Domain adaptation~\citep{ben2007analysis,johansson2019support} aims to learn invariant features or components $\phi(x)$ that have similar $P(\phi(x))$ on different (but related) domains by explicitly minimizing a distribution discrepancy measure, such as the Maximum Mean Discrepancy (MMD)~\citep{gretton2012kernel} or the Correlation Alignment (CORAL)~\citep{sun2016deep}. The above condition, however, is not sufficient to guarantee successful generalization to unseen domains, even when the class-conditional distributions of all covariates changes between source and target domains~\citep{gong2016domain,zhao2019learning}. Robust optimization~\citep{hoffman2018algorithms,lee2018minimax}, on the other hand, minimizes the worst performance over a set of possible environments $E$, that is, $\max_{e\in E} R^e(\theta)$. This approach usually poses strong constraint on the closeness between training and test distributions~\citep{bagnell2005robust} which is often violated in practical settings~\citep{arjovsky2019invariant,ahuja2020invariant}.


Incorporating the machinery of causality into learning models is a recent trend for improving generalization. \citep{bengio2019meta} argued that causal models can adapt to sparse distributional changes quickly and proposed a meta-learning objective
that optimizes for fast adaptation. IRM, on the other hand, presents an optimization-based formulation to find non-spurious actual causal factors to target $y$. Extensions of IRM include IRMG and the Risk Extrapolation (REx)~\citep{krueger2020out}.
%
Our work's motivation is similar to that of  online causal learning~\citep{javed2020learning}, which models the expected value of target $y$ given each feature as a Markov decision process (MDP) and identifies the spurious feature $x_i$ if $\mathbb{E}[y|x_i]$ is not consistent to temporally distant parts of the MDP. The learning is implemented with a gating model and behaves as a feature selection mechanism and, therefore, can be seen as learning the support of the invariant model. The proposed solution, however, is only applicable to binary features and assumes that the aspect of the spurious variables is known (e.g. the color). It also requires careful hyper-parameter tuning. 
{\color{black}
In the cases where data is not divided into environments, Environment Inference for Invariant Learning (EIIL) classification method (\cite{creager2020environment}) aims at splitting the samples into environments. This method proves to be effective also when the environment label is present. 
}

\paragraph{Continual Learning}
\cite{kirkpatrick2017overcoming,de2019continual} addresses the problem of learning one classifier that performs well across multiple tasks given in a sequential manner. The focus is on the avoidance of catastrophic forgetting. 
With our work, we shift the focus of continual learning to the study of a single task that is observed in different environments.

\section{Experimental Evaluation}

\begin{figure}[!t]
	\centering
	\includegraphics[width=1.
    \textwidth]{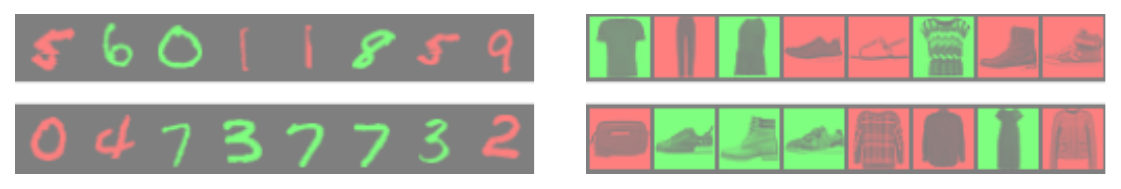}
	\caption{The two color models (on the left {\tt b01}, on the right {\tt b11}) for the train (upper row) and test (lower row) of the MNIST (left) and FashionMNIST (right) datasets.}  
	\label{fig:color_correlation}
\end{figure}

\subsection{Datasets and Experiment Setup}



\paragraph{Colored MNIST}
\begin{wrapfigure}[11]{R}{0.24\textwidth}
\vspace{-4mm}
\centering
    \begin{tikzpicture}[
      mycircle/.style={
         circle,
         draw=black,
         fill=white,
         fill opacity = 0.3,
         text opacity=1,
         inner sep=0pt,
         minimum size=20pt,
         font=\small},
      myarrow/.style={-Stealth},
      node distance=0.6cm and 1.2cm
      ]
      \node[mycircle] (digit) {Digit};
      \node[mycircle,above =of digit] (color) {Color};
      \node[mycircle,right=of digit] (image) {Image};
      \node[inner sep=0pt, above left = .1cm of color ] (hammer) {\includegraphics[width=.05\textwidth]{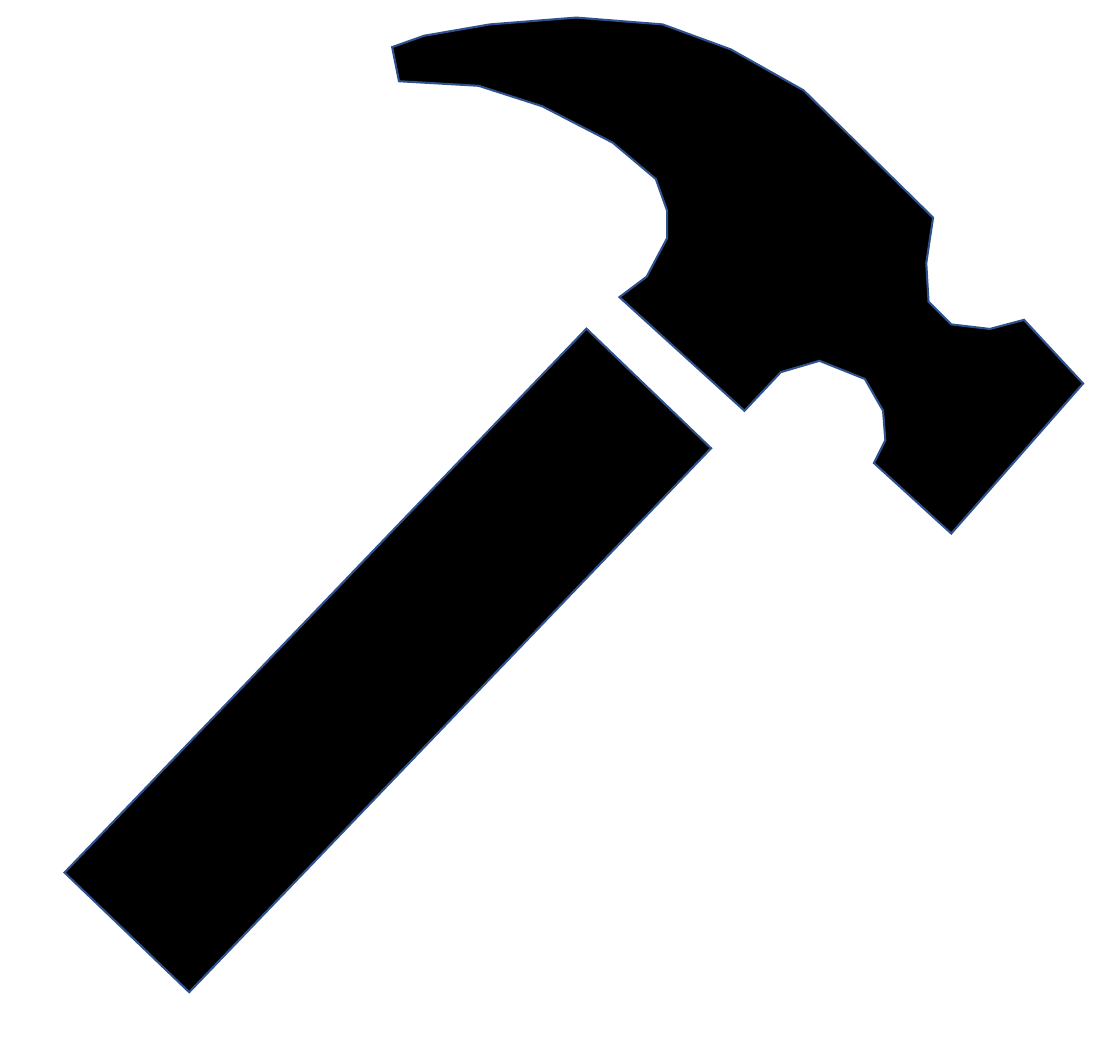}};
    \foreach \i/\j in {
      digit/image/,
      color/image/,
      digit/color/
      }
      \draw [myarrow] (\i) -- node {} (\j);
    \end{tikzpicture} 

    \caption{\label{fig:experiment_causal_model} Causal relationships of colored MNIST.}
\end{wrapfigure}
Figure~\ref{fig:color_correlation} (left) shows a sample of train (upper) and test (lower) samples. In each training environment,
the task is to classify whether the digit is, respectively, even or odd. As in prior work, we add noise to the preliminary label by randomly flipping it with a probability of $0.25$. The color of the image is defined by the variable $z$, which is the noisy label flipped with probability $p_c \in [0.1,0.2]$.  The color of the digit is green if $z$ is even and red if $z$ is odd. Each train environment contains $30,000$ images of size $28 \times 28$ pixels, while the test environment contains $10,000$ images where the probability $p_c = 0.9$. The color of the digit ({\tt b01}) or the background ({\tt b11}) is thus generated from the label but depends on the environment. Figure~\ref{fig:experiment_causal_model} depicts the causal graph (the hammer indicating the effect of the intervention) of the environment. The variable ``Color" is inverted when moving from the training to test environment.

\paragraph{Colored FashionMNIST, KMNIST, and  EMNIST}
Figure~\ref{fig:color_correlation} (right) shows the Fashion-MNIST dataset, where the variable $z$ defines the background color. Again, we add noise to the preliminary label ($y = 0$ for ``t-shirt", ``pullover", ``coat", ``shirt", ``bag" and $y = 1$ for ``trouser", ``dress", ``sandal", ``sneaker", ``ankle boots") by flipping it with $25$ percent probability to construct the final label.
Besides, we also consider Kuzushiji-MNIST dataset~\cite{clanuwat2018deep}\footnote{\url{https://github.com/rois-codh/kmnist}} and the EMNIST Letters dataset~\cite{cohen2017emnist}\footnote{\url{https://www.nist.gov/itl/products-and-services/emnist-dataset}}. The former includes $10$ symbols of Hiragana, whereas the latter contains $26$ letters in the modern English alphabet. For EMNIST, there are $62,400$ training samples per environment and $20,300$ test samples. We set $y=0$ for letters `a', `c', `e', `g', `i', `k', `m', `o', `q', `s', `u', `v', `y' and $y=1$ for remaining ones.

\textbf{Reference Methods.} We compare with a set of popular reference methods in order to show the advantage of the variational Bayesian framework in learning invariant models in the sequential environment setup. For completeness, we also evaluate the performances of four reference continual learning methods. These include Elastic Weight Consolidation (EWC,~\cite{kirkpatrick2017overcoming}), Gradient Episodic Memory (GEM, ~\citep{lopez-paz_gradient_2017})\footnote{\url{https://github.com/facebookresearch/GradientEpisodicMemory}}, Meta-Experience Replay (MER, ~\cite{riemer2018learning})\footnote{\url{https://github.com/mattriemer/mer}}, and Variational Continual Learning (VCL,~\cite{swaroop2019improving, nguyen2018variational})\footnote{\url{https://github.com/nvcuong/variational-continual-learning}}.
\textbf{ERM} is the classical empirical risk minimization method; we always use the cross-entropy loss. 
\textbf{IRMv1} enforces the gradient of the model with respect to a scalar to be zero.
\textbf{IRMG} models the problem as a game among environments, where each environment learns a separate model.
\textbf{EWC} imposes a regularization cost on the parameters that are relevant to the previous task, where the relevance is measured by Fisher Information (FI); \textbf{GEM} uses episodic memory and computes the updates such that accuracy on previous tasks is not reduced, using gradients stored from previous tasks; 
\textbf{MER} uses an efficient replay memory and employs the meta-learning gradient update to obtain a smooth adaptation among tasks;
\textbf{VCL} and Variational Continual Learining with coreset \textbf{VCLC} apply variational inference to continual learning.  
\textbf{C-VIRMv1} and \textbf{C-VIRMG} refer to, respectively, our proposed variational extensions of \textbf{IRMv1} and \textbf{IRGM} in sequential environments. \textbf{C-BVIRM} is the implementation with ADMM.

All hyper-paramter optimization strategies and simulation configurations are discussed in detail in the supplementary material.



\newcommand*\rot{\multicolumn{1}{R{60}{1em}}}


\begin{table}
\centering
	\caption{Mean accuracy ($N=5$) on train and test environments when training on $2$ consecutive environments on MNIST and the b01 color correlation.}
	\label{tab:MNIST2}
\footnotesize
\begin{tabular}{rlllllllllll}
\toprule
&\rot{C-BVIRM} &  \rot{C-VIRMG} &  \rot{C-VIRMv1} &  \rot{ERM} &  \rot{EWC} &  \rot{GEM} &  \rot{IRMG} &  \rot{IRMv1} &  \rot{MER} &  \rot{VCL} &  \rot{VCLC} \\
\midrule
 train& 71.3 & 69.1 & 51.4 & 86.4 & 87.4 & 87.4 & 86.3 & 85.3 & 87.3 & {\bf 89.3 } & {\bf 89.3 } \\
 & \ (4.2) & \ (2.8) & \ (3.4) & \ (1.2) & \ (2.7) & \ (2.7) & \ (1.2) & \ (0.8) & \ (1.7) & \ (0.6) & \ (0.6) \\
 \hline
 test&  29.6 & 27.9 & {\bf 46.0 } & 12.7 & 15.7 & 15.7 & 12.8 &  \ \ 9.9 & 14.8 & 24.9 & 24.9 \\
 & \ (3.3) & \ (8.5) & \ (2.1) & \ (2.7) & \ (4.2) & \ (4.2) & \ (2.6) & \ (0.2) & \ (3.8) & \ (1.9) & \ (1.9) \\
 \bottomrule
\end{tabular}
\end{table}

\begin{table}
\centering
	\caption{Mean accuracy (over $5$ runs) and standard deviation at test time for (n) $2,6,10$ environments, (d) across datasets, and (c) for the two color correlations (b01,b11).}
	\label{tab:environments}
\footnotesize
\begin{tabular}{rr|lllllllllll}
\toprule
& & \rot{C-BVIRM} &  \rot{C-VIRMG} &  \rot{C-VIRMv1} &  \rot{ERM} &  \rot{EWC} &  \rot{GEM} &  \rot{IRMG} &  \rot{IRMv1} &  \rot{MER} &  \rot{VCL} &  \rot{VCLC} \\
\midrule
n & 2 & 29.6 & 27.9 & {\bf 46.0 } & 12.7 & 15.7 & 15.7 & 12.8 &  9.9 & 14.8 & 24.9 & 24.9 \\
 &  & (3.3) & (8.5) & (2.1) & (2.7) & (4.2) & (4.2) & (2.6) & (0.2) & (3.8) & (1.9) & (1.9) \\
 & 6 & 28.8 & 27.5 & {\bf 47.1} & 11.1 & 15.6 & 15.6 & 12.2 &  9.7 & 15.3 & 15.4 & 15.4 \\
 &  & (4.1) & (2.6) & (3.0) & (2.9) & (5.7) & (5.7) & (2.8) & (0.3) & (4.4) & (1.1) & (1.1) \\
 & 10 & 21.8 & 25.2 & {\bf 31.0} & 10.2 & 17.7 & 17.7 & 12.4 & 10.2 & 15.5 & 10.8 & 10.8 \\
 &  & (2.4) & (4.5) & (7.1) & (0.2) & (5.5) & (5.5) & (2.1) & (0.2) & (4.1) & (0.2) & (0.2) \\
d & MNIST & 29.6 & 27.1 & {\bf 46.8 } & 12.7 & 15.7 & 15.7 & 12.8 &  9.9 & 14.8 & 24.9 & 24.9 \\
 &  & (3.3) & (7.6) & (2.6) & (2.7) & (4.2) & (4.2) & (2.6) & (0.2) & (3.8) & (1.9) & (1.9) \\
 & Fa-MNIST & {36.3 } & 26.7 & {\bf 48.2 } & 10.7 & 15.4 & 15.3 & 10.8 &  9.9 & 13.2 & 24.9 & 24.9 \\
 &  & (4.3) & (8.7) & (3.6) & (1.5) & (5.1) & (5.4) & (1.4) & (0.2) & (2.8) & (2.0) & (2.0) \\
 & KMNIST & { 32.8 } & 24.2 & {\bf 46.5 } & 12.0 & 14.0 & 14.0 & 12.1 &  9.9 & 15.6 & 24.9 & 24.9 \\
 &  & (4.6) & (6.0) & (1.9) & (2.1) & (3.5) & (3.5) & (2.4) & (0.2) & (4.1) & (2.0) & (2.0) \\
 & EMNIST & { 32.1 } & 25.0 & {\bf 45.9 } & 10.8 & 15.3 & 14.8 & 10.8 & 10.0 & 12.6 & 24.9 & 24.9 \\
 &  & (4.6) & (7.5) & (2.5) & (1.0) & (3.6) & (3.7) & (1.2) & (0.2) & (2.3) & (2.0) & (2.0) \\
c & b01 & 29.6 & 27.1 & {\bf 46.8 } & 14.9 & 18.8 & 18.8 & 14.6 &  9.8 & 18.0 & 24.9 & 24.9 \\
 &  & (3.3) & (7.6) & (2.6) & (0.8) & (1.3) & (1.3) & (0.8) & (0.1) & (1.0) & (2.0) & (2.0) \\
 & b11 & {38.8 } & 23.9 & {\bf 43.3 } &  9.9 & 12.5 & 12.5 &  9.8 &  9.9 & 11.5 & 24.9 & 24.9 \\
 &  & (4.2) & (6.9) & (4.5) & (0.2) & (3.6) & (3.6) & (0.1) & (0.2) & (2.0) & (2.0) & (2.0) \\
 \bottomrule
\end{tabular}
\end{table}

\subsection{Results}

Table~\ref{tab:MNIST2} lists the training and test accuracy on the MNIST dataset with the color correction b01 (see Figure~~\ref{fig:color_correlation} left). Since we introduced label noise by randomly flipping $25$ percent of the given labels, a hypothetical optimal classifier would be able to achieve an accuracy of $75\%$ in both training and test environments.
ERM, IRMv1, and IRMG perform poorly in the setup where environments are given sequentially. Similarly, reference continual learning methods also fail to learn invariant representation in the new environment. 
As these models are learning to mainly use spurious features for the classification problems at hand, here: the colors of the digits (red$\sim$odd; green$\sim$even), they perform poorly (much worse than a random baseline) when the spurious feature properties are inverted (green$\sim$odd; red$\sim$even).
In contrast, our variational extensions to both IRM and IRMG achieve a classification accuracy higher than $45\%$ on the test data. This implies that our model is not relying exclusively on spurious correlations present in the color of digits. By comparing the performance between C-VIRMv1 and C-BVIRM, we conclude that (1) our proposed bilevel invariant risk minimization framework (i.e., the BIRM in Definition 1) is an effective alternative to the original formulation~\cite{arjovsky2019invariant}; and (2) ADMM is effective in solving the BIRM optimization problem and has the potential to improve the generalization performance.
In addition, one can observe that the KL divergence term in VCL and our framework significantly improves the test accuracy with respect to the baseline counterparts. This result further justifies our motivation of using a variational Bayesian framework for the problem of continual invariant risk minimization. 
Table~\ref{tab:environments} lists the accuracy on the test environment for: (n) (upper rows) an increasing number of sequential environments (d) (central rows) different datasets, and (c) (lower rows) the two given color correlation schemes. We can observe that there is a general trend in the results. IRMG and IRM, with an accuracy of less than $10\%$, are not able to learn invariant models. Similarily, the continual learning reference methods (MER, EWC, MER, VCL, VCLC) also fail with a test accuracy of under $25\%$. The proposed methods on the other hand provide mechanism to learn more robust features and classification models. The higher variance of the accuracy is caused by the stochastic nature of the variational Bayesian formulation. 


\subsection{Environment Inference for Continual invariant learning}

{In practical applications, the environmental labels are usually unavailable, which means that it is difficult or impossible to manually partitioning the training set into ``domains" or ``environments".
In order to generalize our continual invariant learning models to an environment-agnostic setting, we leverage the recently proposed Environment Inference for Invariant Learning (EIIL) by~\cite{creager2020environment} to automatically infer environment partitions from observational training data, and integrate EIIL into our continual invariant learning models.}

{We take our proposed C-VIRMv1 as an example. 
According to Table~\ref{tab:MNIST_EIIL_main}, it is easy to observe that inferring environments directly from observational data (using EIIL) has the potential to improve (continual) invariant learning relative to using the hand-crafted environments. Moreover, C-VIRMv1 with EIIL improves both training and test accuracy, compared with IRMv1 with EIIL.
In fact, this environment partition strategy also enables invariant learning with only one environmental data. 
Table~\ref{tab:MNIST_EIIL_sample2_main} further suggests that the generalization accuracy improves for both IRMv1 and C-VIRMv1 as the number of training samples increases. Again, we observed that, when combined with EIIL, C-VIRMv1 always outperforms IRMv1.}


\begin{table}
\centering
	\caption{Mean accuracy (over $10$ runs) on train and test environments when training off-line on $2$ environments on Colored-MNIST, with the EIIL. ($pc_1=0.2,pc_2=0.1,50'000$ samples)}
	\label{tab:MNIST_EIIL_main}
\footnotesize
{\color{black}
\begin{tabular}{rllll}
\toprule
&{IRMv1} & & {C-VIRMv1} &  \\
& Train & Test & Train & Test  \\
\midrule
 No EIIL & 70.73	(1.16) &	{\bf 67.48 }	(1.96) &	{\bf 70.99 }	(0.90) &	66.60	(2.66) \\
 \hline
 EIIL & { 73.78 }	(0.61) &	67.96	(3.01) &	{\bf 75.29 }	(0.53) & {\bf	68.40 }	(1.11)  \\
 \bottomrule
\end{tabular}
}
\end{table}

\begin{table}
	\caption{Mean accuracy (over $5$ runs) on train and test environments when training on $1$ environment on Colored-MNIST, with and without EIIL. ($pc_1=0.1$)}
	\label{tab:MNIST_EIIL_sample2_main}
\centering
\footnotesize
{\color{black}
\begin{tabular}{rllllllll}
\toprule
&\multicolumn{4}{l}{Without Environment Inference} & \multicolumn{4}{l}{With Environment Inference (EIIL)}  \\
&\multicolumn{2}{l}{IRMv1}  & \multicolumn{2}{l}{C-VIRMv1} &\multicolumn{2}{l}{IRMv1}  & \multicolumn{2}{l}{C-VIRMv1}  \\
$N_s$ & Train & Test & Train & Test & Train & Test & Train & Test \\
\midrule
    1'000 &   93.7  (0.7) &       13.5 (1.7) &        94.1 (1.1) &       13.7 (1.5) &        95.5 (0.3) &       12.7 (2.0) &        96.0 (0.4) &       16.5 (6.0) \\
    2'000 &   91.5 (0.4) &       12.7 (0.9) &        91.1 (0.7) &       11.9 (0.9) &        92.6 (0.4) &       27.8 (2.8) &        93.3 (0.7) &       29.3 (3.4) \\
    5'000 &   90.2 (0.4) &       10.5 (1.1) &        90.1 (0.4) &       10.6 (0.7) &        91.6 (0.4) &       29.6 (4.8) &        91.6 (0.9) &       30.6 (3.2) \\
   10'000 &   89.9 (0.3) &       10.1 (0.5) &        90.0 (0.2) &       10.1 (0.1) &        85.3 (1.0) &       42.9 (3.9) &        83.7 (1.2) &       50.4 (2.3) \\
   20'000 &   90.0 (0.2) &       10.1 (0.2) &        90.1 (0.2) &       10.1 (0.0) &        77.2 (1.2) &       57.4 (2.2) &        77.9 (1.1) &       57.6 (2.0) \\
   50'000 &   90.1 (0.1) &        9.7 (0.4) &        90.0 (0.1) &       10.0 (0.4) &        73.9 (0.5) &       67.2 (1.2) &        74.0 (0.5) &       67.3 (1.0) \\
\bottomrule
\end{tabular}
}
\end{table}

\section{Conclusions}

We aim to broaden the applicability of IRM to settings where environments are observed sequentially. We show that reference approaches fail in this scenario. We introduce a variational Bayesian approach for the estimation of the invariant models and a solution based on ADMM. We evaluate the proposed approach with reference models, including those from continual learning, and show a significant improvement in generalization capabilities. 


\bibliographystyle{iclr2021_conference}
\bibliography{references}

\clearpage

\appendix
\section{Supplementary Material}

\subsection{Variational Invariant Risk Minimization games}



We now consider the IRMG objective and extend it with the variational Bayesian inference.
If we observe all environment at the same time, the prior of the single environment is data independent. From Equation~\ref{eq:bayeian_main}, we thus substitute $q_{t-1}(\theta)$ with a priors $p_\phi(\theta)$ and $q_w(\omega)$, where $\theta$ and $\omega$ are now the parameters of the two functions $\phi$ and $w$. While we substitute $q_t(\theta)$, with the variational distributions $q_\phi(\theta)$ and $q_w(\omega)$. 
The outer problem is now 
\begin{subequations} \label{eq:VIRMG}
\begin{eqnarray}
\min_{q_\phi} && \E_{\phi \sim q(\phi)} R^e (\bar{w} \circ \phi) + \beta \KL (q_\phi||p_\phi) \\
\text{s.t.} && q_{w_e} = \arg \min_{q_{w_e}} \E_{w \sim q_{w_e}} R^e(\frac1{|E|} (w+w_{-e}) \circ \phi ) + \beta \KL (q_{w_e}||p_w) \forall e \in E^{\text{tr}}
\end{eqnarray}
\end{subequations}

where
$\bar{w} = \frac1{|E|} \sum_{e \in E^{\text{tr}}} w_e$, $ w_e \sim q_{w_e}(w)$ is the average classifier and
$w_{-e} = \sum_{e' \in E^{\text{tr}}, e' \ne e} w_{e'},  w_{e'} \sim q_{w_{e'}}(w) $
is the complement classifier. In the reformulation of the IRMG model, we weight the distance of the varional distribution to the prior with $\beta$. We notice how the difference of the variational formulation of IRMG differs on the presence of the mean on the distribution of the function over the variational distributions and the KL term. 

We can now finally extend IRMG when the environments are observed sequentially.  
Combining the definition of IRMG Eqs.~(\ref{eq:IRMG}) with the continual bayesian learning Equation~(\ref{eq:bayeian_main}), we obtain the variational objective of IRMG in sequential environment case.
\begin{subequations} \label{eq:CVIRMG}
\begin{eqnarray}
\min_{q_\phi} && \E_{\phi \sim q(\phi)} \{\ell(y,\bar{w} \circ \phi)\} + \beta \KL (q_\phi||q^{t-1}_\phi) \\
\text{s.t.} &&\bar{w} = \frac1{2} (w + w_{t-1}), w \sim q_{w}(w), w_{t-1} \sim  q^{t-1}_{w}(w) \\
&& q_{w} = \arg \min_{q_{w}} \E_{w \sim q_{w_e},\phi \sim q_\phi} \{ \ell (y,\frac1{2} (w+w_{t-1}) \circ \phi \}  + \beta \KL (q_{w_e}||q^{t-1}_w)
\end{eqnarray}
\end{subequations}
We can similarly extend the definition of IRMv1 when all environments are seen at the same time and sequentially. 



\subsection{Mean Field Parametrization and reparametrization trick}
When we want to implement Equation~\ref{eq:bvirm_admm_main} and Equation~\ref{eq:bvirm_admm_lagrangian} and the different variation, we use the mean field approximation and the reparametrization trick \cite{kingma2013auto}. In this case the density function of our model is parameterized by $\theta$ and $\omega$ and constraints becomes
$ \nabla_{q(\omega)} Q_w^e(\omega^+_e , \theta) = 0 \to  \nabla_{\omega} Q_w^e(\omega , \theta) = 0 $. If we then parametrize $\mu(\omega_\mu)$ and $\sigma(\omega_\sigma)$ the mean and standard deviation and model the distribution as $q_\omega(w) = \mu(\omega_\mu) + \epsilon \sigma(\omega_\sigma) $, with $\epsilon \sim N(0,1)$
We now want to compute the gradient (in the following we ignore the dependence on the $\phi$ and its parameters)
$$
\nabla_\omega Q(\omega) = \nabla_\omega \E_{w \sim q(\omega)} R(w \circ \phi) + \beta \nabla_\omega \KL{(q(\omega)||p) }
$$
The second term is
$$
\nabla_\omega \KL{(q||p) }  = \nabla_\mu \KL{(q||p) } \nabla_\omega \mu  + \nabla_\sigma \KL{(q||p) } \nabla_\omega \sigma
$$
with
$$
\nabla_\omega \mu = 1 ,  \nabla_\omega \sigma = \frac1{\epsilon}
$$
$$
\nabla_\mu \KL{(q||p) }  = - \sigma_p^{-1} (\mu_p-\mu_q)
$$
$$
\nabla_\sigma \KL{(q||p) }  = - \diag{ \sigma_q}^{-1} + \diag {\sigma_p}^{-1}
$$
where we assume $\sigma_p,\sigma_q$ to be diagonal, in this way the previous equation can be evaluated element-wise and where the $\KL{(q||p)}$ is defined as
$$
\KL{(q||p) }  = \ln \frac{|\Sigma_p|}{|\Sigma_q|} -n +\tr{\Sigma_p^{-1} \Sigma_q} + (\mu_p-\mu_q)^T  \Sigma_p^{-1} (\mu_p-\mu_q)
$$
The first term is evaluated by Monte Carlo sampling
$$
\nabla_\omega \E_{w \sim q(\omega)} R(w)  \approx  \nabla_\omega  \frac1{N} \sum_{i=1}^N R(w_i)
$$
with
$$
w_i = \mu(\omega) + \epsilon_i \odot \sigma(\omega)
$$
and $w_i \sim N(0,1)$. Also in this case
$$
\nabla_\omega  \frac1{N} \sum_{i=1}^N R(w_i)  = \nabla_\mu \frac1{N} \sum_{i=1}^N R(w_i) \nabla_\omega \mu  + \nabla_\sigma \frac1{N} \sum_{i=1}^N R(w_i) \nabla_\omega \sigma
$$

\subsection{The BIRM-ADMM Algorithm}
We observe that to solve BIRM we can use Lemma \ref{th:gadmm} 
and write the following algorithm
\begin{subequations} \label{eq:admm1}
\begin{eqnarray} 
	w_e^+ &=& \arg \min_{w_e} L_\rho(w_e , u_e^-,w^-,v_e^-) , \forall e \in E\\
	w^+ &=& 1/|E| \sum_i (w_e+u_e)\\
	u_e^+ &=& u_e^- + (w_e^+ - w^+) \\
	v_e^+&=&v_e^- +  \nabla_{w} R^e(w^+_e \circ \phi)
\end{eqnarray}	
\end{subequations} 
where
\begin{eqnarray} \label{eq:admm2}
	L_\rho(w_e,u_e,w,v_e) &=& R^e (w_e \circ \phi)  + \frac{\rho_0}{2}  \|w_e - w +u_e \|^2 +  \frac{\rho_1}{2}  \|  \nabla_{w} R^e(w_e \circ \phi) + v_e\|^2
\end{eqnarray}
We denote $.^+,.^-$ the values of the variable after and before the update. In order to implement the method we use the SGD to update the model $w_e$ and in a outer loop updating for $\phi$.


\begin{algorithm}
	\label{alg:IRM-ADMM}
	\SetAlgoLined
	\KwResult{$w \circ \phi$ : feature extraction and classifier for the environment $E$ }
	\tcp{Randomly initialize the variables}
	$w, w_e, u_e, v_e, \phi \gets \text{Init()} $ \;
	\tcp{Outer (on $\phi$) and Inner loop (on $w$) }
	\While{not converged}{
		\tcp{ Update $\phi$ using stochastic gradient descent(SGD) }
		$\phi =  \text{SGD}_{\phi} (\sum_e R^e (w \circ \phi ))  $ \;
		\For {$k = 1, \dots, K$}{
			\For {$e \in E$}{
				$w_e = \text{SGD}_{w_e} L_\rho(w_e , u_e, w, v_e) $ \;
				$w = 1/|E| \sum_e (w_e+u_e)$ \;
				$u_e = u_e + (w_e - w)$ \;
				$v_e = v_e +  \nabla_{w} R^e(w_e \circ \phi)$ \;
			}
		}
		
	}
	\caption{$w,\phi \gets $ BIRM-ADMM($E,R^e$) ADMM version of the Bilevel IRM Algorithm}
\end{algorithm}

\subsection{Variational Invariant Risk Minimization}
\begin{definition} [VIRM] \label{def:VIRM}
	Give a set of distribution over the mapping $P_\phi$ and a distribution over the set of classifier $P_w$, a {\bf variational invariant predictor } on a set of environments $E$ is said to satisfy the Variational Invariant Risk Minimization (VIRM) if it is the solution of the following problem
    \begin{subequations} \label{eq:VIRM}
	\begin{eqnarray}
	 \mathop{ \min_{q_\phi \in P_\phi}}_{q_w \in P_w}  && \sum_{e \in E}  Q_\phi^e(q_w,q_\phi)\\
	\text{s.t.} && q_w \in \arg \min_{q^e_w \in P_w}  Q_w^e(q_w,q_\phi), \forall e \in E \\
	\text{where} && Q_\phi^e(q_w,q_\phi) =  \mathop{\E_{w \sim q_w}}_{\phi \sim q_\phi} R^e(w \circ \phi) + \beta \KL(q_\phi||p_\phi) {\color{black} + \beta \KL(q_{w}||p_{w})}, \\
	&& Q_w^e(q_w,q_\phi) =  \mathop{\E_{w \sim q_w}}_{\phi \sim q_\phi}  R^e(w \circ \phi) + \beta \KL(q_{w}||p_{w})
\end{eqnarray}
\end{subequations}
	and $p_{\phi},p_{w}$ are the priors of the two distributions.
\end{definition}

\subsection{Bilevel Alternative Formulation}
We state here a general result on solving Bilevel Optimization Problems
\begin{lemma}[Bilevel Reformulation] \label{th:reform}
\begin{subequations} \label{eq:BI1}
\begin{eqnarray}
	\min_{x} && F(x,y) | G(x,y(x)) \le 0 \\
	\text{s.t.} && y(x) \in \arg \min_y f(x,y) | g(x,y) \le 0
\end{eqnarray}
\end{subequations} 
then we can solve the equivalent problem
\begin{subequations} \label{eq:BI2}
\begin{eqnarray}
	\min_{x,y,u} && F(x,y) | G(x,y(x)) \le 0, \\
	&&  \nabla_y L(x,y,u)=0, \\
	&& u \ge 0, \\
	&& g(x,y) \le 0, \\
	&& u^Tg(x,y)=0 \\
	L(x,y,u) &=& f(x,y) + u^Tg(x,y)
\end{eqnarray}
\end{subequations} 
\end{lemma}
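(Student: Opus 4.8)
The plan is to show that the feasible sets of problems~(\ref{eq:BI1}) and~(\ref{eq:BI2}) have the same projection onto the $(x,y)$ coordinates, so that the two problems share the same optimal value and the same minimizing $(x,y)$; the multiplier $u$ enters only as a witness of lower-level optimality and never appears in the objective $F$. Everything therefore reduces to the classical characterization of solutions of the lower-level program by its Karush--Kuhn--Tucker (KKT) system.

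First I would fix $x$ and look at the lower-level program $\min_y f(x,y)$ subject to $g(x,y)\le 0$. Under the assumptions (implicit in the statement) that $f(x,\cdot)$ and every component of $g(x,\cdot)$ are convex and that a constraint qualification holds (for instance Slater's condition, or affine $g$), the KKT theorem says that $y$ is a global minimizer of this program if and only if there is a $u\ge 0$ with $\nabla_y L(x,y,u)=0$, $g(x,y)\le 0$ and $u^\top g(x,y)=0$, where $L(x,y,u)=f(x,y)+u^\top g(x,y)$. Necessity of these conditions at a minimizer is where the constraint qualification is used; sufficiency is where convexity is used, since then $L(x,\cdot,u)$ is convex, any stationary point is a global minimizer of $L(x,\cdot,u)$, and combining this with primal feasibility and complementary slackness upgrades it to global optimality for the constrained problem.

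Next I would lift this pointwise-in-$x$ statement to the level of feasible sets. Let $S_1$ denote the feasible set of~(\ref{eq:BI1}), that is, the pairs $(x,y)$ for which $y$ is lower-level optimal and $G(x,y)\le 0$, and let $S_2$ denote the feasible set of~(\ref{eq:BI2}) in $(x,y,u)$-space. By the previous step, $(x,y)\in S_1$ if and only if there exists $u$ with $(x,y,u)\in S_2$; equivalently, $S_1$ is exactly the image of $S_2$ under $(x,y,u)\mapsto(x,y)$. Because $F$ does not depend on $u$, minimizing $F$ over $S_2$ yields the same value as minimizing $F$ over $S_1$, and $(x^\star,y^\star)$ solves~(\ref{eq:BI1}) if and only if $(x^\star,y^\star,u^\star)$ solves~(\ref{eq:BI2}) for some $u^\star$. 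This is the claimed equivalence. When specialized to (B)IRM the lower-level problem carries only the equality constraint $\nabla_w R^e=0$ and no inequality $g$, so $u$ and the complementarity line drop out and the statement collapses to the substitution of ``$w\in\arg\min_w R^e$'' by ``$\nabla_w R^e=0$'', which is precisely the passage from~(\ref{eq:IRM}) to Definition~\ref{def:BIRM}.

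The main obstacle is conceptual rather than computational: the reformulation is exact only when the KKT system is simultaneously \emph{necessary} and \emph{sufficient} for lower-level optimality, and each direction needs hypotheses that the statement suppresses. Without a constraint qualification a lower-level minimizer may fail to be a KKT triple, so $S_2$ could project onto a strict subset of $S_1$; without convexity of the lower-level problem a KKT point need not be a minimizer (not even a local one), so $S_2$ could project onto a strict superset and~(\ref{eq:BI2}) would be a mere relaxation of~(\ref{eq:BI1}). Pinning down the right standing assumptions — a convex lower level together with a constraint qualification holding at the relevant points — is therefore the real work; the set-theoretic bookkeeping above is then routine.
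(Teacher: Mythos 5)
Your proposal is correct and follows essentially the same route as the paper, whose proof is a one-line appeal to the KKT conditions of the lower-level problem (citing Chapter~5 of Boyd and Vandenberghe) with the Lagrangian $L(x,y,u)=f(x,y)+u^{T}g(x,y)$. You additionally make explicit the convexity and constraint-qualification hypotheses under which the KKT system is both necessary and sufficient — hypotheses the paper leaves implicit — but this is a more careful rendering of the same argument rather than a different one.
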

{\color{black} 
\begin{proof} [Proof of Lemma~\ref{th:reform} ]
Lemma~\ref{th:reform} follows by applying the Karush-Kuhn-Tucker conditions (Chapter 5 \cite{boyd2004convex}) to Eq.\ref{eq:BI1}, where the Lagrangian function is $L(x,y,u) = f(x,y) + u^Tg(x,y)$.
\end{proof}
\begin{lemma}[Equivalence of Definition \ref{def:BIRM}] \label{th:equivalent1}
Definition \ref{def:BIRM} is equivalent to Eq. \ref{eq:IRM}, the Invariant Risk Minimization.
\end{lemma}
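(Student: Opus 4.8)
The plan is to obtain Definition~\ref{def:BIRM} as a direct specialization of the generic bilevel reformulation in Lemma~\ref{th:reform}. First I would recast Eq.~\ref{eq:IRM} in the template of Eq.~\ref{eq:BI1}: the outer variable is $x = (\phi, w)$ with outer objective $F(x) = \sum_{e \in E} R^e(w\circ\phi)$ and vacuous outer feasibility map $G$, while for each environment $e \in E$ the inner problem is $\min_{w_e \in H_w} R^e(w_e \circ \phi)$. The key structural observation is that this inner problem is \emph{unconstrained} over $H_w$, so the inner inequality-constraint function $g$ in Eq.~\ref{eq:BI1} is absent; the only coupling between the inner problems and the outer problem is the requirement that the single shared classifier $w$ simultaneously solve the inner minimization for every $e$.

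Next I would apply Lemma~\ref{th:reform} to each of the $|E|$ inner minimizations. Since $g$ is empty, the Lagrangian in Lemma~\ref{th:reform} reduces to $L(x, w_e, u_e) = R^e(w_e \circ \phi)$, there are no multipliers $u_e$ to introduce, and the dual-feasibility line $u \ge 0$ and complementary-slackness line $u^{T}g = 0$ of Eq.~\ref{eq:BI2} vanish. All that survives of the KKT system is the stationarity condition $\nabla_{w} R^e(w\circ\phi) = 0$. Collecting this over all $e \in E$ converts the inner-$\argmin$ constraint set of Eq.~\ref{eq:IRM} into exactly $\{\,\nabla_w R^e(w\circ\phi) = 0,\ \forall e \in E\,\}$, i.e.\ Eq.~\ref{eq:BIRMb}, while the outer objective Eq.~\ref{eq:BIRMa} is literally the objective of Eq.~\ref{eq:IRM}. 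Hence the two programs have the same objective and the same feasible set, giving the claimed equivalence; this also makes transparent why, after the definition, the IRMv1 penalty of Eq.~\ref{eq:IRMv1} arises by moving the equality constraint into the objective.

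The step that needs care — and the one I would flag as the main obstacle — is that a stationary point of $w \mapsto R^e(w\circ\phi)$ need not be a global minimizer, so \emph{a priori} the feasible set of Eq.~\ref{eq:BIRM} only contains that of Eq.~\ref{eq:IRM}. I would resolve this exactly as Lemma~\ref{th:reform} does: under the standing regularity assumptions under which the KKT conditions are also \emph{sufficient} — in particular $w \mapsto R^e(w\circ\phi)$ convex on $H_w$, which holds for the squared and logistic risks with a linear classification head as used throughout the paper — stationarity is equivalent to global optimality and the two feasible sets coincide exactly. Absent convexity, I would simply note that Definition~\ref{def:BIRM} is the natural first-order surrogate of Eq.~\ref{eq:IRM}, consistent with the ``approximately locally optimal'' reading of IRMv1, and that this is precisely the reformulation that enables the ADMM treatment of Section~3, since an equality constraint on a gradient is tractable where an $\argmin$ constraint is not.
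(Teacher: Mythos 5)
Your proof takes essentially the same route as the paper: the paper's entire argument is the single line that the result ``follows by apply Lemma~\ref{th:reform} to Eq.~\ref{eq:IRM},'' and you have simply carried out that application explicitly, correctly observing that the inner problems are unconstrained so the multipliers, dual feasibility, and complementary slackness all vanish and only the stationarity condition $\nabla_w R^e(w\circ\phi)=0$ survives, which is exactly Eq.~\ref{eq:BIRMb}. Your added caveat --- that stationarity is equivalent to inner global optimality only under convexity of $w\mapsto R^e(w\circ\phi)$, so that a priori the feasible set of Eq.~\ref{eq:BIRM} merely contains that of Eq.~\ref{eq:IRM} --- is a genuine point the paper's one-line proof glosses over, and your resolution of it is the right one.
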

\begin{proof} [Proof of Lemma~\ref{th:equivalent1} ]
The result follows by apply Lemma~\ref{th:reform} to Eq.\ref{eq:IRM}.
\end{proof}
\begin{lemma}[Definition \ref{def:BVIRM}] \label{th:equivalent2}
Definition \ref{def:BVIRM} is the extension of Eq. \ref{eq:BIRM}, the Bilevel Invariant Risk Minimization, when the function is described by the distributions of their variable $\phi$ and $w$.
\end{lemma}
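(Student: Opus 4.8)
The plan is to exhibit BVIRM as the image of BIRM under the distributional lifting prescribed by the continual variational objective of Equation~\ref{eq:bayeian_main}, and then to check that BIRM is recovered in the degenerate (point-mass) limit, so that "extension" is justified in a precise sense. I would proceed in three steps. First, replace the point estimates $\phi \in H_\phi$ and $w \in H_w$ by variational distributions $q_\phi \in P_\phi$, $q_w \in P_w$, and the deterministic risk $R^e(w\circ\phi)$ by its expectation $\E_{w\sim q_w,\phi\sim q_\phi} R^e(w\circ\phi)$; this is the only change made to the objective apart from regularization. Second, since in the offline setting all environments are observed simultaneously, the prior for the single environment is data-independent, so applying Equation~\ref{eq:bayeian_main} (equivalently Corollary~\ref{th:bayesian}) with $q_{t-1}$ replaced by the fixed priors $p_\phi,p_w$ augments the outer objective by $\beta\KL(q_\phi||p_\phi)+\beta\KL(q_w||p_w)$ and the inner objective by $\beta\KL(q_w||p_w)$ — which is exactly the definition of $Q_\phi^e$ and $Q_w^e$ in Equation~\ref{eq:BVIRM_q}. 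Third, lift the KKT stationarity constraint $\nabla_w R^e(w\circ\phi)=0$ of Equation~\ref{eq:BIRMb} to the variational stationarity constraint $\nabla_{q_w} Q_w^e(q_w,q_\phi)=0$; under a mean-field parameterization of $q_w$ by $\omega$ together with the reparameterization trick (as worked out in the appendix), this functional gradient reduces to the ordinary gradient $\nabla_\omega Q_w^e$, so the feasible set becomes the set of distributions whose parameters make the expected, KL-regularized inner risk stationary. Collecting these three modifications of Equation~\ref{eq:BIRM} yields precisely Equation~\ref{eq:BVIRM}, which proves the claim.

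To make "extension" literal, I then verify the reduction. If $P_\phi,P_w$ are permitted to contain Dirac masses and we take $q_\phi=\delta_\theta$, $q_w=\delta_\omega$, then $\E_{w\sim q_w,\phi\sim q_\phi} R^e(w\circ\phi)=R^e(w_\omega\circ\phi_\theta)$; sending $\beta\to 0$ (so the $\KL(\delta||p)$ terms contribute no $\omega,\theta$-dependence) makes $Q_\phi^e$ and $Q_w^e$ coincide with $R^e$, and the constraint $\nabla_{q_w}Q_w^e=0$ collapses to $\nabla_w R^e(w\circ\phi)=0$. Hence Equation~\ref{eq:BVIRM} restricted to point masses with $\beta=0$ is exactly Equation~\ref{eq:BIRM}, which is in turn equivalent to IRM (Equation~\ref{eq:IRM}) by Lemma~\ref{th:equivalent1}. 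This exhibits Definition~\ref{def:BIRM} as the $\beta\to 0$, deterministic special case of Definition~\ref{def:BVIRM}.

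The step I expect to be the main obstacle is making the variational gradient $\nabla_{q_w} Q_w^e$ rigorous and showing it matches the parametric gradient used in Algorithm~\ref{alg:BVIRM-ADMM}: one must argue that stationarity of the map $q\mapsto Q_w^e(q,q_\phi)$ over the (possibly curved) family $P_w$ is equivalent to stationarity of $\omega\mapsto Q_w^e$ over the parameter space, which requires the parameterization $\omega\mapsto q_\omega$ to be regular, and one must treat the Dirac limit with care since $\KL(q_\omega||p)$ diverges as the variance shrinks to zero — so the reduction is to be read as a limit rather than a substitution. The remaining ingredients — linearity of expectation, the identification of the added penalties with those mandated by Equation~\ref{eq:bayeian_main}, and the point-mass collapse of the objective — are routine.
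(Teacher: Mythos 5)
Your proposal is correct and follows essentially the same route as the paper's own proof: substitute the expected, KL-regularized objectives of Equation~\ref{eq:bayeian_main} (with data-independent priors in place of $q_{t-1}$) for the deterministic risks, split the regularizer via the independence factorization $\KL(q_\phi q_w||p_\phi p_w)=\KL(q_\phi||p_\phi)+\KL(q_w||p_w)$, and replace the inner minimality condition by stationarity in $q_w$, using that the $\KL(q_\phi||p_\phi)$ term has zero gradient with respect to $q_w$. The paper stops at that sketch; your additional point-mass/$\beta\to 0$ reduction to Equation~\ref{eq:BIRM} and your caveats about the functional gradient and the divergence of $\KL(\delta||p)$ go beyond what the paper verifies but are consistent with it.
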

\begin{proof} [Proof of Lemma~\ref{th:equivalent2} ]
The result follows by inspecting Eq. \ref{eq:BIRM}. The equation requires the minimisation of the aggregated loss function, which is now, from Eq.\ref{eq:bayeian_main}: 
\begin{equation} \label{eq:obj_bvirm}
Q_\phi^e(q_w,q_\phi) =  \mathop{\E_{w \sim q_w}}_{\phi \sim q_\phi} R^e(w \circ \phi) + \beta \KL(q_\phi||p_\phi)+ \beta \KL(q_{w}||p_{w}),
\end{equation}
where we have separated the two contributions in $\phi$ and $w$, and used genetic prior distributions $p_\phi$ and $p_w$. This is by the additive property of KL divergence:
\begin{equation} \label{eq:kl_add}
\KL(q_\phi q_{w}||p_\phi p_{w}) = \KL(q_\phi||p_\phi) + \KL(q_{w}||p_{w}),
\end{equation}
since we model the two distributions independently, i.e. $q_{\phi,w} = q_\phi q_{w}$ and $q_{\phi,w} = p_\phi p_{w}$. 
Since the classifiers' losses shall be minimal for all environments, this condition is substituted by requiring the gradient with respect to $q_w$ to be zero, $\forall e$. The gradient w.r.t. $q_w$ of the second term of Eq.\ref{eq:obj_bvirm} is zero. 
\end{proof}
}

{\color{black}
\subsection{Theorem 3 and IRM connection}
\subsubsection{Sequential Information Projection}
In Theorem.\ref{th:i-proj}, we show that the Information Projection (IP) shrinks the support of the output distribution. 
\begin{lemma} \label{th:sequential_i-proj}
If we have a sequence of families of distributions $P_i$. Let $p_1 \in P_1$ and 
$$
p_{i+1} = \arg \min_{p \in P_{i+1}} \KL(p,p_{i})
$$ 
then 
$$
\supp p_{i} \subseteq \bigcap_{j\le i} \supp (P_j)
$$
\end{lemma}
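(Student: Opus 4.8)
The plan is to prove Lemma~\ref{th:sequential_i-proj} by induction on $i$, using Theorem~\ref{th:i-proj} as the single-step building block. Here I read $\supp(P_j)$, the support of a \emph{family} of distributions, as $\bigcup_{p\in P_j}\supp(p)$, so that $p\in P_j$ gives $\supp(p)\subseteq\supp(P_j)$, consistent with the way $\supp(\cdot)$ appears in the statement of Theorem~\ref{th:i-proj}.

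For the base case $i=1$, I would simply note that $p_1\in P_1$, hence $\supp(p_1)\subseteq\supp(P_1)=\bigcap_{j\le1}\supp(P_j)$. For the inductive step, assuming $\supp(p_i)\subseteq\bigcap_{j\le i}\supp(P_j)$, I apply Theorem~\ref{th:i-proj} with $P:=P_{i+1}$ and $q:=p_i$ (viewing the single distribution $p_i$ as the degenerate family $\{p_i\}$). The theorem then gives $\supp(p_{i+1})\subseteq\supp(P_{i+1})\cap\supp(p_i)$, and intersecting the inductive hypothesis into the right-hand side yields
\[
\supp(p_{i+1})\ \subseteq\ \supp(P_{i+1})\cap\bigcap_{j\le i}\supp(P_j)\ =\ \bigcap_{j\le i+1}\supp(P_j),
\]
which closes the induction and proves the lemma.

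The bookkeeping with the nested intersections is routine; the one point that needs care — and the reason Theorem~\ref{th:i-proj} carries a support-overlap hypothesis — is that each projection $p_{i+1}=\argmin_{p\in P_{i+1}}\KL(p||p_i)$ must actually be well-defined. Finiteness of $\KL(p||p_i)$ forces $\supp(p)\subseteq\supp(p_i)$, so one needs $P_{i+1}$ to contain at least one distribution absolutely continuous with respect to $p_i$; since by the inductive hypothesis $\supp(p_i)$ is already the shrinking set $\bigcap_{j\le i}\supp(P_j)$, this is exactly the standing assumption that all the families retain a common non-empty overlap. I would make this assumption explicit in the hypotheses of the lemma (mirroring the ``partially overlapping support'' condition of Theorem~\ref{th:i-proj}); under it, the argument above goes through without further obstacles.
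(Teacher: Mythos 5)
Your proof is correct and follows essentially the same route as the paper: the paper likewise combines $p_i\in P_i$ with Theorem~\ref{th:i-proj} applied to the pair $(P_{i+1},p_i)$ and then ``iterates the property,'' which is exactly your induction written out more tersely. Your explicit treatment of the base case, the nested intersections, and the standing overlap assumption needed for the $\argmin$ to be well-defined is a more careful rendering of the same argument, not a different one.
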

\begin{proof}[Proof of Lemma~\ref{th:sequential_i-proj}]
We have that $\forall i, \supp p_i \subseteq  \supp (P_i)  \bigcap  \supp (P_{i-1}) $, where the first condition follows from $p_i \in P_i$ in the minimization and the second from Theorem~\ref{th:i-proj}. The results follows by iterating the property.
\end{proof}

\subsubsection{IRM and Information Projection}
\begin{wrapfigure}[10]{R}{0.4\textwidth}
\centering
\includegraphics[width=0.3\textwidth]{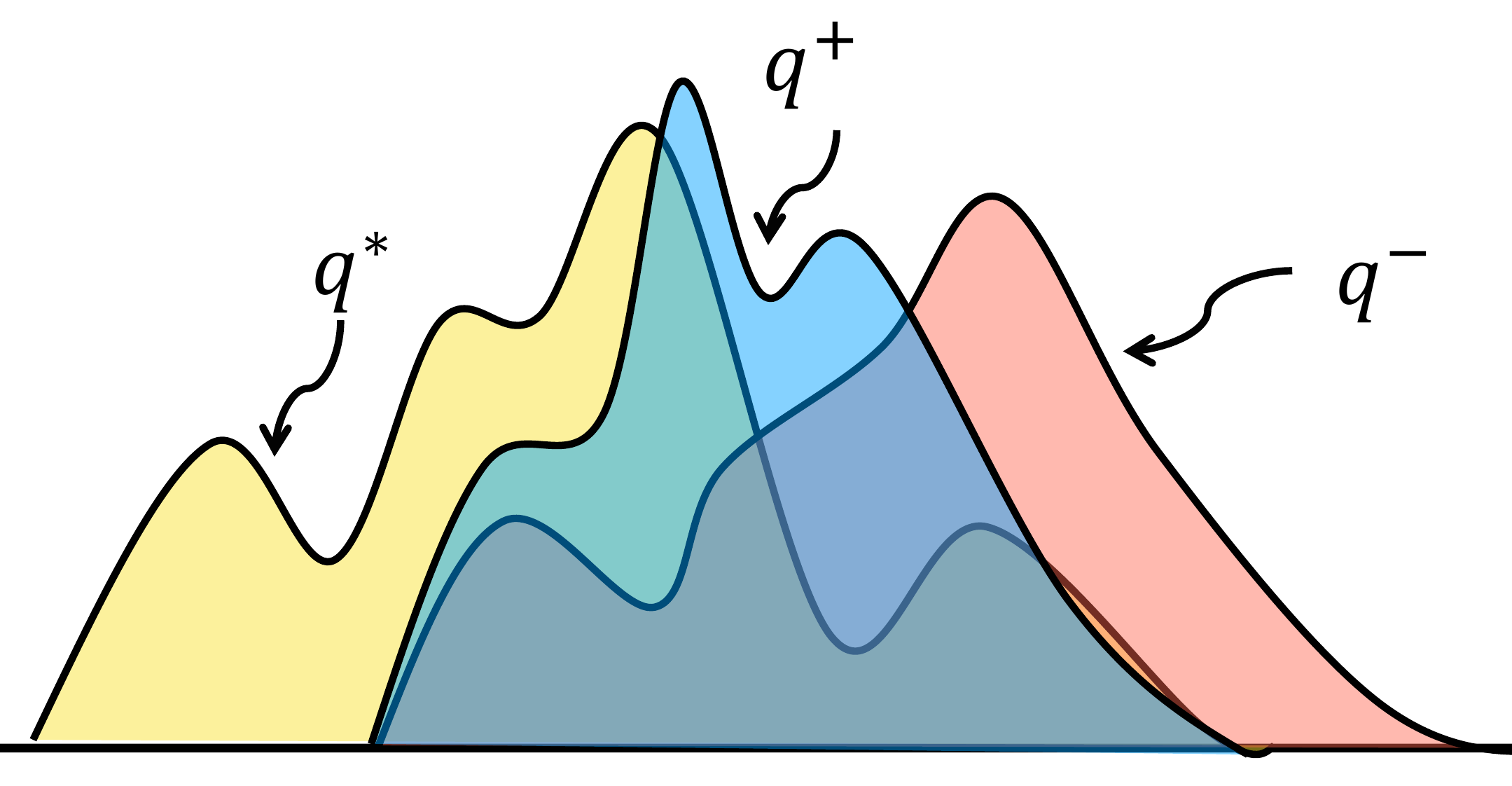}
\caption{\label{fig:irm_invariant_distribution} Sequential IRM projection of distributions,  where  $q^+=\arg \min \KL(p||q^-) + \KL(p||q^*)$ } 
\end{wrapfigure}
We show now two ways to state the connection of the IRM principle and the sequential IP.
Let $q^-$ be the distribution of the previous environment and $R(q)$ the loss function of the current environment, where $q$ denotes the distribution of the network parameters.
Let 
$$
q^* = \arg \min_q R(q)
$$ 
be the optimal distribution for the current environment. We can then consider the Taylor expansion of the parameters distribution around the optimal distribution as
$$
R(q) = R(q^*)+\Delta q^T \nabla_q R(q^*)
$$
we can compute the new distribution as
\begin{eqnarray*}
\Delta q^* &=& \arg \min_{\Delta q} \KL( q^*+\Delta q||q^-) \\
\text{s.t.} && \Delta q^T \nabla_q R(q^*) \le \epsilon
\end{eqnarray*}
and then 
$$
p^+ = q^*+\Delta q^*
$$
Or alternatively 
\begin{eqnarray*}
p^*,q^* &=& \arg \min_{p,q} \KL( p||q^-) + \KL( p||q) \\
\text{s.t.} && \nabla_q R(q) = 0
\end{eqnarray*}
and then
$$
q^+ = p^*
$$
Or more simply
\begin{eqnarray} \label{eq:irm_ip}
q^+ &=& \arg \min_{p} \KL( p||q^-) + \KL( p||q^*) 
\end{eqnarray}
This last equation, shows how the new distribution is the intersection of the optimal distribution at the previous step $q^-$ and the current optimal distribution $q^*$. Fig~\ref{th:sequential_i-proj} shows visually, how the new distribution is the result of projecting into two distributions $q^*$ and $q^-$.

}

{\color{black}
\subsection{Out of distribution Generalization}
The question arises if the property of generalization to out of distributions given by Theorem 9 in \cite{arjovsky2019invariant} also holds for BIRM and BVIRM. 

\begin{lemma} \label{th:invariant1}
If $\phi$ and $w$ are linear functions and $w \circ \phi = \Phi ^T w$ is a solution of Eq.\ref{eq:BIRM} it then satisfies 
$$
\Phi \E_{X^e} \left[ X^e X^{eT} \right] \Phi^T w = \Phi \E_{X^e,Y^e} \left[ X^e Y^{eT} \right]
$$
\end{lemma}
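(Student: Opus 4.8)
The plan is to recognize that Lemma~\ref{th:invariant1} is simply the first-order (stationarity) constraint of the BIRM problem written out in the linear least–squares setting, so the proof is a short computation rather than a genuine argument. I would first pin down the loss: as in the out-of-distribution discussion surrounding Theorem~9 of \cite{arjovsky2019invariant}, take $R^e(w\circ\phi)=\tfrac12\,\E_{(X^e,Y^e)}\big[\lVert (w\circ\phi)(X^e)-Y^e\rVert^2\big]$. Under linearity we have $\phi(x)=\Phi x$ and the classifier acts by $z\mapsto w^{\mathsf T}z$, so $(w\circ\phi)(x)=w^{\mathsf T}\Phi x$, i.e. the composed predictor is the vector/matrix $\Phi^{\mathsf T}w$ appearing in the statement.

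Next I would use that any solution of Equation~\ref{eq:BIRM} is, in particular, feasible, and hence satisfies the stationarity constraint \eqref{eq:BIRMb}: $\nabla_{w}R^e(w\circ\phi)=0$ for every $e\in E$, where $\nabla_w$ differentiates only the explicit classifier argument with the feature extractor $\Phi$ held fixed. Since $w$ does not occur inside $\Phi$, differentiating the quadratic risk is routine and gives
\[
\nabla_{w}R^e(w\circ\phi)=\Phi\,\E_{X^e}\!\big[X^e X^{e\mathsf T}\big]\,\Phi^{\mathsf T}w \;-\; \Phi\,\E_{X^e,Y^e}\!\big[X^e Y^{e\mathsf T}\big].
\]
Setting this expression to $0$ and moving the second term to the other side yields exactly $\Phi\,\E_{X^e}[X^e X^{e\mathsf T}]\,\Phi^{\mathsf T}w=\Phi\,\E_{X^e,Y^e}[X^e Y^{e\mathsf T}]$ for each environment $e$, which is the claim.

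There is essentially no hard step here; the whole content is the implication ``BIRM solution $\Rightarrow$ feasibility $\Rightarrow$ per-environment stationarity of the inner risk,'' after which the identity is just the normal equation for linear least squares composed with $\Phi$. The only points needing care are (i) stating explicitly which loss is meant, so that the first-order condition is linear in $w$; (ii) keeping the transposes and outer products consistent so that the vector-valued-$Y^e$ case matches the matrix form in the statement; and (iii) noting that $\nabla_w$ in \eqref{eq:BIRMb} varies only the classifier, which is precisely why $\Phi^{\mathsf T}w$ (and not some expression also perturbing $\Phi$) appears. I would close by remarking that this identity is the linear-algebraic starting point from which the OOD-generalization argument of \cite{arjovsky2019invariant} can be replayed for BIRM and BVIRM.
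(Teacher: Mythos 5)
Your proof is correct and follows essentially the same route as the paper's: the paper likewise computes $\nabla_{w}R^e(w\circ\phi)=\Phi\,\E_{X^e}[X^eX^{eT}]\Phi^{T}w-\Phi\,\E_{X^e,Y^e}[X^eY^{eT}]$ for the quadratic risk and sets it to zero using the BIRM feasibility constraint. Your version is somewhat more explicit about the choice of loss and the fact that $\nabla_w$ holds $\Phi$ fixed, but the argument is the same.
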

\begin{proof}
Lemma \ref{th:invariant1} follows from the fact that 
\begin{eqnarray}
\nabla_{w} R^e(w \circ \phi) &=& \Phi \E_{X^e} \left[ X^e X^{eT} \right] \Phi^T w - \Phi \E_{X^e,Y^e} \left[ X^e Y^{eT} \right] \\
&=& 0 
\end{eqnarray}
\end{proof}
For BIRM thus Theorem 9 of \cite{arjovsky2019invariant} applies directly. 
A similar results holds for the the BVIRM model
\begin{lemma} \label{th:invariant2}
If $\phi \sim p_\phi$ and $w \sim p_\phi $ are linear functions and $w \circ \phi = \Phi ^T w$ is a solution of Eq.\ref{eq:BVIRM}, with $\beta=0$, it then satisfies 
$$
\E_{\phi \sim q_\phi} \left\{ {\Phi} \E_{X^e} \left[ X^e X^{eT} \right] {\Phi}^T \right\} \bar{w} = \bar{\Phi} \E_{X^e,Y^e} \left[ X^e Y^{eT} \right]
$$
where $\bar{\Phi} = \E_{\Phi \sim p_\phi} [\Phi]$ and $\bar{w} = \E_{w \sim p_w}  [w]$ are the mean values. 
\end{lemma}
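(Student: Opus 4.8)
\textbf{Proof plan for Lemma~\ref{th:invariant2}.}
The plan is to mirror the argument used for the deterministic case in Lemma~\ref{th:invariant1}, but carrying the expectation over the feature-extractor distribution $q_\phi$ through the computation. First I would specialize the BVIRM objective of Eq.~\ref{eq:BVIRM} to the linear setting: with $w\circ\phi=\Phi^T w$ and the squared loss $R^e(w\circ\phi)=\E_{X^e,Y^e}\|Y^e-\Phi^T w\|^2$, and with $\beta=0$ so the KL terms drop, the inner objective reduces to $Q_w^e(q_w,q_\phi)=\E_{\phi\sim q_\phi}\E_{w\sim q_w}\E_{X^e,Y^e}\|Y^e-\Phi^T w\|^2$. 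Because the loss is quadratic in $w$ and (after taking $\E_{X^e,Y^e}$) its expectation depends on $q_w$ only through the first two moments, the stationarity condition $\nabla_{q_w}Q_w^e=0$ can be reduced to a condition on the mean $\bar w=\E_{w\sim q_w}[w]$, exactly as the reparametrized gradient computation in the ``Mean Field Parametrization'' appendix indicates.

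Next I would compute that gradient explicitly. Expanding the square and taking $\E_{X^e,Y^e}$ gives a term $\bar w^T\,\E_{\phi\sim q_\phi}\!\big[\Phi\,\E_{X^e}[X^e X^{eT}]\,\Phi^T\big]\,\bar w$ (quadratic in $\bar w$) minus $2\,\bar w^T\,\E_{\phi\sim q_\phi}[\Phi]\,\E_{X^e,Y^e}[X^e Y^{eT}]$ (linear in $\bar w$), up to a $\bar w$-independent constant and the second-moment (variance) contribution of $q_w$, which does not involve $\bar w$. Setting the gradient with respect to $\bar w$ to zero then yields
$$
\E_{\phi \sim q_\phi}\!\left\{ \Phi\,\E_{X^e}\!\left[X^e X^{eT}\right]\Phi^T \right\}\bar w \;=\; \bar\Phi\,\E_{X^e,Y^e}\!\left[X^e Y^{eT}\right],
$$
with $\bar\Phi=\E_{\Phi\sim q_\phi}[\Phi]$, which is the claimed identity. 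The outer stationarity condition on $q_\phi$ is not needed for this particular conclusion; only the inner optimality constraint of Eq.~\ref{eq:BVIRM} is used, just as in Lemma~\ref{th:invariant1}.

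The main obstacle I anticipate is being careful about what ``$\nabla_{q_w}Q_w^e=0$'' means and justifying the reduction to a condition on $\bar w$ alone. One has to argue that, in the linear–quadratic case, the only part of the objective that couples to the decision variable in a way that a stationary point constrains is the mean, so that the stationarity condition is equivalent to the normal-equation in $\bar w$; the second-moment terms of $q_w$ enter additively and are independent of $\bar w$ (and of $\bar\Phi$), hence contribute nothing to the $\bar w$-gradient. A secondary subtlety is the interchange of the expectations $\E_{\phi\sim q_\phi}$, $\E_{w\sim q_w}$ and $\E_{X^e,Y^e}$, together with the independence assumption $q_{\phi,w}=q_\phi q_w$ already invoked in the proof of Lemma~\ref{th:equivalent2}; under the stated linearity and finite-moment assumptions Fubini applies and the interchange is legitimate. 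Once these points are settled, the algebra is the same moment computation as in Lemma~\ref{th:invariant1} with $\Phi$ replaced by its distribution.
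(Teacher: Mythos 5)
Your proposal is correct and follows essentially the same route as the paper: both use only the inner stationarity constraint of Eq.~\ref{eq:BVIRM} with $\beta=0$, reduce it to a condition on the mean of $q_w$, and carry out the same moment computation to obtain the normal equation with $\Phi$ replaced by its expectation. The only cosmetic difference is that the paper realizes the derivative as a Fr\'echet directional derivative under the translation perturbation $w \mapsto w+\epsilon\eta$ and then factors out $\eta$, whereas you expand the quadratic in the first two moments of $q_w$ and differentiate with respect to $\bar w$ directly --- these are the same restriction of $\nabla_{q_w}$ to mean shifts, and your explicit remark that the variance term does not couple to $\bar w$ is a point the paper leaves implicit.
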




\begin{proof}
Lemma \ref{th:invariant2} follows from the fact that 
\begin{eqnarray}
\nabla_{q_w}  Q_w^e(q_w,q_\phi) |_{\beta=0} &=&\nabla_{q_w}  \mathop{\E_{w \sim q_w}}_{\phi \sim q_\phi}  R^e(w \circ \phi) \\
&=& 0 
\end{eqnarray}
We now take the Fr\'echet directional derivative in the $\eta$ direction that is the limit of 
$$ 
\delta_{q_w,\eta} \mathop{\E_{w \sim q_w}}_{\phi \sim q_\phi}  R^e(w \circ \phi) = \lim_{\epsilon \to 0} \frac1{\epsilon} ( \mathop{\E_{w \sim q_w}}_{\phi \sim q_\phi}  R^e(((w+\epsilon \eta ) \circ \phi) -  \mathop{\E_{w \sim q_w}}_{\phi \sim q_\phi}  R^e(w \circ \phi) )
$$
which is obtained when we differentiate the distribution $q_w \to q_w +\epsilon \eta$. 
Since $\delta_{q_w,\eta} \mathop{\E_{w \sim q_w}}_{\phi \sim q_\phi}  R^e(w \circ \phi) = \mathop{\E_{w \sim q_w}}_{\phi \sim q_\phi}  2 \eta^T {\Phi} \E_{X^e} \left[ X^e X^{eT} \right] {\Phi}^T {w} -2  \eta^T {\Phi} \E_{X^e,Y^e} \left[ X^e Y^{eT} \right]  $ 
we can factorize for the direction $\eta$ and obtain
$$
\delta_{q_w} R^e(w \circ \phi) = 2 \mathop{\E_{w \sim q_w}}_{\phi \sim q_\phi}  {\Phi} \E_{X^e} \left[ X^e X^{eT} \right] {\Phi}^T {w} -  {\Phi} \E_{X^e,Y^e} \left[ X^e Y^{eT} \right] 
$$
We can now derive the Lemma by requiring $ \delta_{q_w} R^e(w \circ \phi) = 0$
\end{proof}
Theorem 9 of \cite{arjovsky2019invariant} now holds when $\phi$ has $\rank r>0$ in expectation with respect to the invariant distribution $q_\phi$, i.e. $\E_{\phi \sim q_\phi} \rank(\Phi) = r$.


}

\subsection{Generalized ADMM}
The following generalization of ADMM holds:
\begin{lemma}[GADMM] \label{th:gadmm}
	Suppose we want to minimized
	\begin{eqnarray}
		\min_{x} && \sum_i f_i(x) | g_i(x) = 0, \forall i \in I
	\end{eqnarray}
	we can equivalently solve the following problem
	\begin{eqnarray}
		\min_{x_i,z} && \sum_i f_i(x_i) | x_i=z, g_i(x_i) = 0, \forall i \in I
	\end{eqnarray}
	using the following update role (scaled ADMM)
	
	\begin{subequations} \label{eq:gadmm_update}
	\begin{eqnarray}
		x_i^+ &=& \arg \min_{x_i} L_\rho(x_i,x_{-i}^-,u_i^-,z^-,v_i^-)  , \forall i \in I \\
		z^+ &=& 1/N \sum_i (x_i+u_i)\\
		u_i^+ &=& u_i^- + (x_i^+ - z^+) \\
		v_i^+&=&v_i^- + g_i(x_i^+)
	\end{eqnarray}	
	\end{subequations} 
	where the augmented Lagrangian
	\begin{eqnarray}
		L_\rho(x_i,u_i,z,v_i) &=& \sum_i f_i(x_i) + \frac{\rho_0}{2}  \sum_i \|x_i - z +u_i \|^2  +  \frac{\rho_1}{2}  \sum_i  \| g_i(x_i) + v_i\|^2
	\end{eqnarray}	
	and  $x_{-i} = \{x_j, j \ne i\}$ is the set of all other variable, expect the $i$-th.
\end{lemma}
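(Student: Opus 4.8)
The plan is to prove the lemma in two movements: (a) show the original problem and the consensus reformulation are equivalent, and (b) show that the recursion \eqref{eq:gadmm_update} together with the stated $L_\rho$ is exactly the scaled ADMM iteration for that reformulation. No convexity is needed here: the lemma asserts only that the two problems coincide and that the displayed updates \emph{are} the ADMM updates, not that the iteration converges — convergence (under the usual bounds) is the business of \cite{ouyang2013stochastic} and is invoked separately in the text.

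For the equivalence I would argue both inclusions. Any $x$ feasible for the original problem (so $g_i(x)=0$ for all $i\in I$) yields the point $x_i:=x$, $z:=x$, which satisfies the consensus constraints $x_i=z$ and the constraints $g_i(x_i)=0$ with unchanged objective $\sum_i f_i(x)$. Conversely, for any $(\{x_i\},z)$ feasible for the reformulated problem, the consensus constraints force $x_i=z$ for all $i$, hence $g_i(z)=g_i(x_i)=0$ for all $i$ and $\sum_i f_i(x_i)=\sum_i f_i(z)$, so $z$ is feasible for the original problem with the same value. Thus the two problems share optimal value and their minimizers correspond.

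For the ADMM derivation I would form the scaled augmented Lagrangian of the reformulated problem, attaching scaled multipliers $u_i$ to the consensus constraints $x_i-z=0$ and $v_i$ to the constraints $g_i(x_i)=0$; completing the square on each linear-plus-quadratic multiplier term turns the penalty for $x_i-z$ into $\tfrac{\rho_0}{2}\|x_i-z+u_i\|^2$ and the penalty for $g_i(x_i)$ into $\tfrac{\rho_1}{2}\|g_i(x_i)+v_i\|^2$, up to terms independent of the primal variables — that is precisely $L_\rho$. ADMM then performs one Gauss--Seidel sweep of the primal blocks followed by dual ascent: minimizing $L_\rho$ over the block $x=(x_1,\dots,x_N)$ decouples across $i$, because the $i$-th summand depends on $x_i$ alone, which gives the parallel updates $x_i^+=\arg\min_{x_i}L_\rho(\cdot)$; minimizing over $z$ is an unconstrained quadratic with solution $z^+=\tfrac1N\sum_i(x_i^+ + u_i^-)$; and the dual-ascent steps are $u_i^+=u_i^-+(x_i^+-z^+)$ and $v_i^+=v_i^-+g_i(x_i^+)$. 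These are the four lines of \eqref{eq:gadmm_update}.

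The one genuinely delicate point — and the place I expect the proof to need care — is that $g_i$ is a general, possibly nonlinear map, so $\{g_i(x_i)=0\}$ is not an affine coupling constraint and \eqref{eq:gadmm_update} is ADMM only in the formal ``augmented Lagrangian plus a Gauss--Seidel sweep'' sense. To force it into the textbook two-block template one introduces slack variables $t_i$ with $g_i(x_i)=t_i$ and $t_i=0$ (still an equivalent reformulation), takes $x$ as the first block and $(z,t_1,\dots,t_N)$ as the second, and observes that the $t$-block minimization collapses to $t_i=0$, at which point $L_\rho$ and \eqref{eq:gadmm_update} are recovered verbatim; the dual update for $g_i(x_i)-t_i=0$ becomes $v_i^+=v_i^-+g_i(x_i^+)$ as claimed. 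I would also remark that the $x_i$-subproblem may itself be nonconvex and is in practice solved only approximately (the SGD steps in Algorithms~1 and 2), which does not affect the equivalence and the identification established here.
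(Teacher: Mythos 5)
The paper states Lemma~\ref{th:gadmm} without any proof at all --- it appears in the supplementary material as a bare assertion, implicitly justified by analogy with the consensus/sharing ADMM of \cite{boyd2011distributed} --- so there is no paper proof to compare yours against. Your argument is correct and fills the gap: the two-inclusion equivalence of the consensus reformulation is exactly right, and the completion-of-the-square identification of $L_\rho$ as the scaled augmented Lagrangian (with $u_i=\lambda_i/\rho_0$, $v_i=\mu_i/\rho_1$), the decoupling of the $x$-block across $i$, the closed-form $z$-average, and the two dual-ascent steps reproduce \eqref{eq:gadmm_update} verbatim. The point you flag as delicate is indeed the only substantive one: $g_i(x_i)=0$ is not a two-block coupling constraint, so without your slack-variable device ($g_i(x_i)=t_i$, $t_i$ absorbed into the $z$-block and collapsing to $0$) the recursion is only ``ADMM'' in the loose augmented-Lagrangian-plus-Gauss--Seidel sense; making that reduction explicit is a genuine improvement on what the paper provides, and it also makes transparent why the convergence guarantees cited in the main text (\cite{ouyang2013stochastic}) do not follow from the lemma itself for nonconvex $f_i$ or nonaffine $g_i$, a caveat the paper itself only gestures at. Your closing remark that the $x_i$-subproblem is solved inexactly by SGD in Algorithms~1 and~2, without affecting the equivalence statement, is also accurate.
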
	

\subsection{Continual Variational Inference}

Following \cite{nguyen2018variational} we can state the following lemma.
\begin{lemma}[Variational Continual Learning] \label{th:vcl}
Suppose we have a sequence of datasets $D_i, i=1,\dots,t$ drown i.i.d, then the variational estimation of the distribution $q_t$ at step $t$ is given as projection on KL divergence 
$$
q_t (\theta) = \arg \min_{q(\theta)} \KL{\left(q (\theta) || \frac{1}{Z_t} q_{t-1}(\theta) p(D_t|\theta)\right)}
$$
with $Z_t=\int q_{t-1}(\theta) p(D_t|\theta) d\theta$ the normalization factor, which does not depends on $q$.
\end{lemma}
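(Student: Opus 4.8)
The plan is to obtain the claim by combining three ingredients: (i) the defining property of the variational posterior as the KL projection of the exact Bayesian posterior onto the variational family, (ii) Bayes' rule applied one environment at a time, and (iii) the i.i.d.\ assumption, which decouples the likelihood of the new data from the past. Concretely, $q_t$ is by definition $\arg\min_{q}\KL\!\left(q(\theta)\,\|\,p(\theta\mid D_1^t)\right)$, where $p(\theta\mid D_1^t)$ is the exact posterior after observing $D_1^t=D_1,\dots,D_t$; the whole argument then reduces to rewriting this target density in the stated recursive form.

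First I would write Bayes' rule for the exact posterior after $t$ environments, $p(\theta\mid D_1^t)\propto p(\theta\mid D_1^{t-1})\,p(D_t\mid\theta,D_1^{t-1})$, and use the i.i.d.\ assumption to drop the conditioning, $p(D_t\mid\theta,D_1^{t-1})=p(D_t\mid\theta)$, so that $p(\theta\mid D_1^t)\propto p(\theta\mid D_1^{t-1})\,p(D_t\mid\theta)$. Next I would invoke the recursion hypothesis $p(\theta\mid D_1^{t-1})\approx q_{t-1}(\theta)$, replacing the intractable previous posterior by its variational surrogate; this turns the target into the proper density $\tilde p_t(\theta)=\frac{1}{Z_t}q_{t-1}(\theta)p(D_t\mid\theta)$ with $Z_t=\int q_{t-1}(\theta)p(D_t\mid\theta)\,d\theta$, a quantity that is finite because $q_{t-1}$ integrates to one and, crucially, does not depend on $q$. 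Substituting $\tilde p_t$ for $p(\theta\mid D_1^t)$ in the projection gives exactly $q_t(\theta)=\arg\min_{q}\KL\!\left(q(\theta)\,\|\,\frac{1}{Z_t}q_{t-1}(\theta)p(D_t\mid\theta)\right)$, which is the statement. As a sanity check, and to connect with Equation~\ref{eq:bayeian_main}, I would expand $\KL\!\left(q\,\|\,\tilde p_t\right)=-\E_{\theta\sim q}\log p(D_t\mid\theta)+\KL\!\left(q\,\|\,q_{t-1}\right)+\log Z_t$ and note that, since $\log Z_t$ is constant in $q$, the minimizer coincides with that of the per-step objective (expected loss plus a KL penalty toward $q_{t-1}$).

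The only non-mechanical point — and the one I would be careful to state honestly — is the step $p(\theta\mid D_1^{t-1})\approx q_{t-1}(\theta)$: this is an approximation, not an identity, so the lemma is best read as the \emph{definition} of the VCL recursion, asserting that the surrogate produced at step $t-1$ is fed back in as the prior at step $t$. Given that reading, the remaining pieces (the i.i.d.\ factorization of the likelihood, the independence of $Z_t$ from $q$, and the ELBO expansion) are routine.
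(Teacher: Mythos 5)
Your proposal is correct and follows essentially the same route as the paper's proof: apply Bayes' rule with the i.i.d.\ assumption to factor the likelihood of the new environment, substitute the variational surrogate $q_{t-1}$ for the intractable exact posterior $p(\theta\mid D_1^{t-1})$, and define $q_t$ as the KL projection onto the resulting normalized target. Your explicit caveat that the substitution $p(\theta\mid D_1^{t-1})\approx q_{t-1}(\theta)$ is an approximation rather than an identity, and your ELBO expansion (which is the content of the paper's subsequent Lemma on VCLv2), are both consistent with, and slightly more careful than, the paper's own presentation.
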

\begin{proof} [Proof of Lemma \ref{th:vcl}]
Let denote $D_1^t = \bigcup_{i=1}^t D_i$, from i.i.d. $p(D_1^t) = \prod_{i=1}^t p(D_i)$. We are interested to maximase the a posteriori probability of the paramters give the data 
$p(\theta|D_1^t)$
$$
p(\theta|D_1^t) = \frac1{p(D_t)} p(\theta|D_1^{t-1}) p(D_t|\theta)
$$
since 
\begin{eqnarray*}
p(\theta,D_1^t) &=& p(\theta|D_1^t)p(D_1^t) \\
&= & p(\theta)p(D_1^t|\theta) \\
&= & p(\theta)\prod_{i=1}^t p(D_i|\theta) \\
&= & p(\theta)p(D_1^{t-1}|\theta)p(D_t|\theta) \\
&= & p(\theta,D_1^{t-1})p(D_t|\theta) \\
&= & p(\theta|D_1^{t-1})p(D_1^{t-1})p(D_t|\theta)
\end{eqnarray*}
thus
\begin{eqnarray*}
p(\theta|D_1^t) &=& \frac1{p(D_1^t)} p(\theta|D_1^{t-1})p(D_1^{t-1})p(D_t|\theta) \\
&=& \frac{p(D_1^{t-1})}{p(D_1^t)} p(\theta|D_1^{t-1})p(D_t|\theta) \\
&=& \frac1{p(D_t)} p(\theta|D_1^{t-1})p(D_t|\theta) \\
\end{eqnarray*}
We now use a probability distribution which approximates the distribution at step $t-1$
$$
q_{t-1}(\theta) \approx p(\theta|D_1^{t-1})
$$
when then want to approximate at time $t$
$$
q_{t}(\theta) \approx  \frac1{p(D_t)} q_{t-1}(\theta) p(D_t|\theta)
$$
This can be obtain by minimizing the KL divergence of the variational distribution $q_t$ and the distribution induced by the previous step approximation, thus
$$
q_t (\theta) = \arg \min_{q(\theta)} \KL{\left(q (\theta) || \frac{1}{Z_t} q_{t-1}(\theta) p(D_t|\theta)\right)}
$$
\end{proof}

\begin{lemma}[VCLv2]\label{th:VCLv2}
The minimization of the VCL defined in Lemma~\ref{th:vcl}, is equivalent to solve the following minimization
$$
q_t (\theta) = \arg \max_{q(\theta)} 
\E_{\theta \sim q(\theta)} \{\log p(D_t|\theta)\} -
\KL{\left(q (\theta) || q_{t-1}(\theta)\right)}
$$
with $N_t$ i.i.d. samples 
\begin{eqnarray*}
\E_{\theta \sim q(\theta)} \{\log p(D_t|\theta)\} &=& \frac1{N_t} \sum_{i=1}^{N_t} E_{\theta \sim q(\theta)} \{\log p(y_i^t|\theta,x_t^t)\} 
\end{eqnarray*}
\end{lemma}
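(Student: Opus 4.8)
The plan is to prove the equivalence by a direct expansion of the Kullback--Leibler divergence appearing in Lemma~\ref{th:vcl}, and then to obtain the per-sample decomposition of the likelihood term from the i.i.d.\ assumption. Neither step hides a genuine difficulty; the only point requiring attention has already been settled inside Lemma~\ref{th:vcl}.

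First I would rewrite the objective of Lemma~\ref{th:vcl} as an integral and split the logarithm of the ratio,
$$
\KL{\left(q(\theta) \,\|\, \frac{1}{Z_t} q_{t-1}(\theta) p(D_t|\theta)\right)} = \int q(\theta)\log\frac{q(\theta)}{q_{t-1}(\theta)}\,d\theta \;-\; \int q(\theta)\log p(D_t|\theta)\,d\theta \;+\; \log Z_t\int q(\theta)\,d\theta .
$$
Using $\int q(\theta)\,d\theta = 1$, the three terms are, respectively, $\KL(q(\theta)\|q_{t-1}(\theta))$, $\E_{\theta\sim q(\theta)}\{\log p(D_t|\theta)\}$, and the constant $\log Z_t$, so that
$$
\KL{\left(q(\theta) \,\|\, \frac{1}{Z_t} q_{t-1}(\theta) p(D_t|\theta)\right)} = \KL\big(q(\theta)\,\|\,q_{t-1}(\theta)\big) - \E_{\theta\sim q(\theta)}\{\log p(D_t|\theta)\} + \log Z_t .
$$
Because $Z_t = \int q_{t-1}(\theta) p(D_t|\theta)\,d\theta$ is functionally independent of $q$ (as noted in Lemma~\ref{th:vcl}), the term $\log Z_t$ is an additive constant for the optimization. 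Hence $\argmin_{q(\theta)}$ of the left-hand side equals $\argmin_{q(\theta)}\big[\KL(q(\theta)\|q_{t-1}(\theta)) - \E_{\theta\sim q(\theta)}\{\log p(D_t|\theta)\}\big]$, i.e.\ $\argmax_{q(\theta)}\big[\E_{\theta\sim q(\theta)}\{\log p(D_t|\theta)\} - \KL(q(\theta)\|q_{t-1}(\theta))\big]$, which is the claimed objective.

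It then remains to decompose the data term. Since the $N_t$ pairs $(x_i^t,y_i^t)$ constituting $D_t$ are drawn i.i.d., the likelihood factorizes as $p(D_t|\theta) = \prod_{i=1}^{N_t} p(y_i^t|\theta,x_i^t)$, hence $\log p(D_t|\theta) = \sum_{i=1}^{N_t} \log p(y_i^t|\theta,x_i^t)$; applying $\E_{\theta\sim q(\theta)}$ and linearity of expectation gives $\E_{\theta\sim q(\theta)}\{\log p(D_t|\theta)\} = \sum_{i=1}^{N_t}\E_{\theta\sim q(\theta)}\{\log p(y_i^t|\theta,x_i^t)\}$, matching the stated identity up to the $1/N_t$ factor that reflects the per-sample normalization chosen for the objective. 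The main (indeed only) subtlety worth flagging is guaranteeing that $Z_t$ does not depend on the optimization variable $q$, so that it may be discarded from the $\argmin$; this is immediate from its definition and was already established in Lemma~\ref{th:vcl}.
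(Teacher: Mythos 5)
Your proof is correct and follows essentially the same route as the paper: expand the KL divergence from Lemma~\ref{th:vcl} into $\KL(q\|q_{t-1}) - \E_q\log p(D_t|\theta) + \log Z_t$ and drop the $q$-independent $\log Z_t$. You additionally spell out the i.i.d.\ factorization of the likelihood (which the paper only asserts) and correctly flag that the stated per-sample identity holds only up to the $1/N_t$ normalization factor.
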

Where the second term can be computed in closed form for known distribution as for example with the Gaussian distributions, whereas the expectation can be approximated by Monte Carlo sampling. For a general loss function we can substitute the reconstruction probability with the loss function associated with a neural network parametrized by $\theta$ 
$$
\log p(y_i^t|\theta,x_t^t) \gets \ell(y_i^t,(w \circ \phi)_\theta (x_t^t)) 
$$ 
\begin{eqnarray*}
\E_{\theta \sim q(\theta)} \{\log p(D_t|\theta)\} 
&\gets&\frac1{N_t} \sum_{i=1}^{N_t} E_{\theta \sim q(\theta)} \ell(y_i^t,(w \circ \phi)_\theta (x_t^t)) \} 
\end{eqnarray*}
\begin{proof}[Proof of Lemma \ref{th:VCLv2}]
The Lamma follows from the definition of the KL diveregnce
\begin{eqnarray*}
\KL{\left(q (\theta) || \frac{1}{Z_t} q_{t-1}(\theta) p(D_t|\theta)\right)} &=& \E_q(\ln q(\theta) - \ln q_{t-1}(\theta)  - \ln p(D_t|\theta) + \ln Z_t) \\
&=& \E_q(\ln q(\theta) - \ln q_{t-1}(\theta)) - \E_q \ln p(D_t|\theta)  + \E_q \ln Z_t \\
&=& \KL ( q(\theta) || q_{t-1}(\theta)) - \E_q \ln p(D_t|\theta)  + \ln Z_t \\
\end{eqnarray*}
The last term does not depend on $q$. Thus the result follows. 
\end{proof}
If we substitute the log of the posterior probability with a specific loss function we obtain the following Corollary.  
\begin{corollary} [Continual Variational Bayesian Inference] \label{th:bayesian}
Given a loss function $\ell(y,\hat{y})$, the variational continual learning is formulated as 
\begin{equation}\label{eq:bayeian}
    q_t (\theta) = \arg \min_{q(\theta)} 
\E_{(x,y) \sim D_t} \E_{\theta \sim q(\theta)} \{\ell(y,f_\theta (x))\} +
\KL{\left(q (\theta) || q_{t-1}(\theta)\right)},
\end{equation}
with $f_\theta = (w \circ \phi)_\theta$
\end{corollary}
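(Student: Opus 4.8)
The plan is to read Corollary~\ref{th:bayesian} as the loss-function specialization of Lemma~\ref{th:VCLv2}. First I would start from the form of the variational continual update established there, namely that $q_t$ maximizes $\E_{\theta \sim q}\{\log p(D_t|\theta)\} - \KL(q \| q_{t-1})$, together with the i.i.d. decomposition $\E_{\theta\sim q}\{\log p(D_t|\theta)\} = \frac1{N_t}\sum_{i=1}^{N_t}\E_{\theta\sim q}\{\log p(y_i^t|\theta,x_i^t)\}$; the factor $N_t$ is a positive constant and may be dropped from the optimization without changing the $\argmax$.

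Next I would make explicit the generative model associated with a given loss $\ell$: declare $p(y\mid\theta,x)\propto\exp\!\big(-\ell(y,f_\theta(x))\big)$ with $f_\theta=(w\circ\phi)_\theta$, so that $-\log p(y\mid\theta,x)=\ell(y,f_\theta(x))+c$ for a $\theta$-independent constant $c$. This reproduces the standard dictionary (Gaussian observations $\leftrightarrow$ squared error, categorical/softmax observations $\leftrightarrow$ cross-entropy, the latter being the loss used in the experiments). Substituting this identity into the decomposition above, negating to convert the $\argmax$ into an $\argmin$, and discarding the additive constant and the harmless scalar $N_t$, the data term becomes $\frac1{N_t}\sum_{i=1}^{N_t}\E_{\theta\sim q}\{\ell(y_i^t,f_\theta(x_i^t))\}$, which I would rewrite as the empirical expectation $\E_{(x,y)\sim D_t}\E_{\theta\sim q}\{\ell(y,f_\theta(x))\}$. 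Collecting the two terms then yields exactly Equation~\ref{eq:bayeian}.

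I do not expect a genuine obstacle here, since the argument is essentially bookkeeping on top of Lemma~\ref{th:VCLv2}; but the one point that deserves care, and which I would flag as the crux, is the identification of $\ell$ with a negative log-likelihood. For this to define a bona fide likelihood one needs $\exp(-\ell(y,f_\theta(x)))$ to be normalizable in $y$, and otherwise the exponential-family-style construction above is simply taken as the \emph{definition} of the probabilistic model underlying the chosen loss. A secondary subtlety is that any global temperature reweighting between the data term and $\KL(q\|q_{t-1})$ is absorbed into constants at this stage and is reintroduced explicitly as the hyper-parameter $\beta$ in the BVIRM objectives of the main text.
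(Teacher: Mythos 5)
Your proposal is correct and follows essentially the same route as the paper: the paper obtains the corollary by taking the objective of Lemma~\ref{th:VCLv2} and substituting the log-likelihood term with the loss via $\log p(y_i^t|\theta,x_i^t) \gets -\ell(y_i^t,(w\circ\phi)_\theta(x_i^t))$, which turns the $\argmax$ into the $\argmin$ of Equation~\ref{eq:bayeian}. Your additional remarks on the normalizability of $\exp(-\ell)$ and on the implicit rescaling between the data term and the KL term make explicit what the paper leaves as an informal substitution, but they do not change the argument.
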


\subsection{Proofs}
\begin{proof} [Proof of Theorem \ref{th:i-proj}]
Let first first recall that $\KL(p||q) = \int p(x) \ln \frac{p(x)}{q(x)} dx$. If $q(x)=0$ then $p(x)=0$ otherwise the distance is infinite. Second if $p(x)=0$, then the contribution of $q(x)$ is not considered since the integral is taken of the support of $p$, thus, since the intersection is not null and $p$ is the result of an optimization, the support of $p$ is the intersection of the support of $q$ and the support of $P$.   
\end{proof}

\subsection{Datasets and color correction}
We here visualize few of the dataset and color correlations. Figure~\ref{fig:mnist_train_test_b11} shows Fashion-MNIST and the b11 color correlation. In the test environment the background color of each class is inverted. In Figure~\ref{fig:mnist_and_fashionmnist_origina} we show the dataset as generated from \cite{ahuja2020invariant}. In Figure~\ref{fig:emnist_and_kmnist} we show the EMINST (letter) and KMNIST dataset.


\begin{figure}[!hbpt]
	\centering
	\subfigure[] {
		\includegraphics[width=0.4\textwidth, trim = 0 0 0 .7cm,clip]{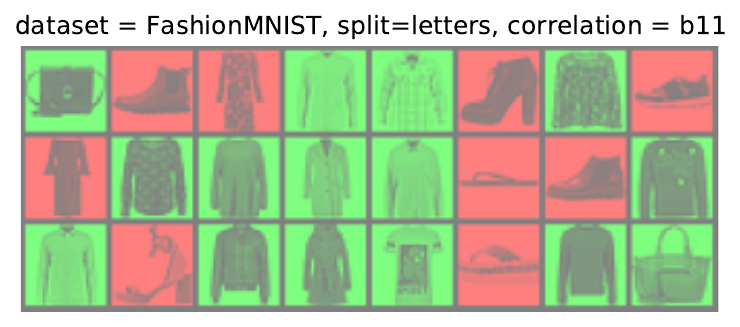}
	}
	\subfigure[] {
		\includegraphics[width=0.4\textwidth, trim = 0 0 0 .7cm,clip]{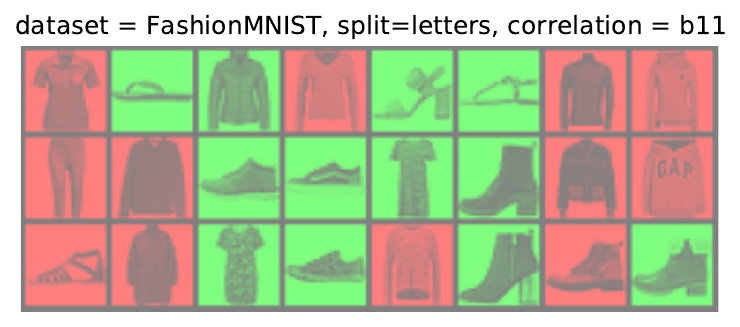}
	}
	\caption{Fashion MNIST dataset training (a) and testing (b) environments; the color is inverted based on the b11 color correlation scheme, where the background color depends on the class of the image. In the test environment the dependency is inverted.}
	\label{fig:mnist_train_test_b11}
\end{figure}

\begin{figure}[!hbpt]
	\centering
	\subfigure[] {
	\includegraphics[width=0.4\textwidth]{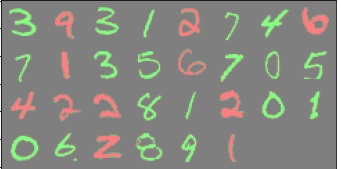}
	}
    \subfigure[] {
	\includegraphics[width=0.4\textwidth]{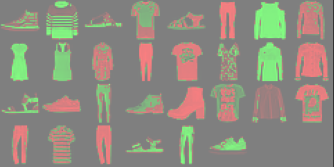}
	}
	\caption{MNIST dataset (a) and Fashion MNIST (b) environments as defined in \cite{ahuja2020invariant}}
	\label{fig:mnist_and_fashionmnist_origina}
\end{figure}



\begin{figure}[!hbpt]
	\centering
	\subfigure[] {
`	\includegraphics[width=0.45\textwidth, trim = 0 0 0 .7cm,clip]{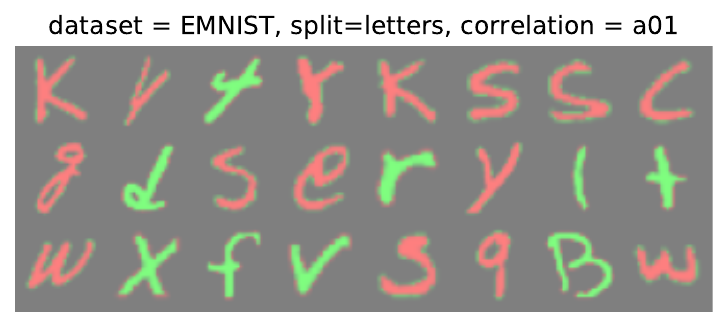}
    }
    \subfigure[] {
	\includegraphics[width=0.45\textwidth, trim = 0 0 0 .7cm,clip]{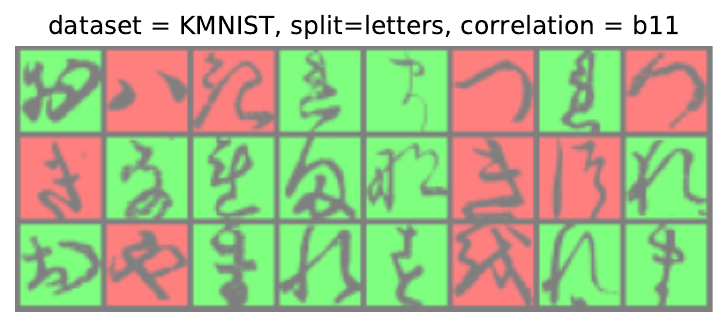}
    }
    \caption{Examples of the EMNIST dataset (a) and of the KMNIST (b).}
    \label{fig:emnist_and_kmnist}
\end{figure}

\subsection{Hyper-parameter Search and Experimental Setup}
We performed hyper-parameter search around the suggested values from the original works and the values selected based on the best performance on the test environment. To implement a complete comparison we used for training $1'000$ samples randomly drawn from each environment. All methods were trained on the same data, using random seed reset. We trained all method with $100$ epochs on a batch size of $256$. 
\begin{itemize}
    \item \textbf{IRM}: $\gamma=91257$,  threshold = 1/2 epochs, learning rate $2.5e^{-4}$
    \item \textbf{IRMG}:  warm start=$300$, termination accuracy $0.6$, learning rate $2.5e^{-4}$, dropout probability $75\%$, weight decays = $.00125$
    \item \textbf{ERM}: learning rate $1e^{-3}$, dropout probability $75\%$, weight decays = $.00125$
    \item \textbf{MER}: memory size $100$ ($10\%$ of the samples), learning rate $1e^{-3}$, replay batch size =$5$, $\beta=.03$, $\gamma=1.0$
    \item \textbf{GEM}: memory size $100$ ($10\%$ of the samples), learning rate $1e^{-3}$
    \item \textbf{EWC}: memory size $100$ ($10\%$ of the samples), learning rate $1e^{-3}$, regularization $0.1$
    \item \textbf{VCL,VCLC}: learning rate = $5e^{-3}$, corset size $100$ ($10\%$ of the samples),
        
    \item \textbf{C-BVIRM, C-VIRMG, C-VIRMv1}: weight decays = $.00125$, $\beta = 1.$, number  evaluations $5$, $\rho_0=\rho_1=10$, step threshold =1/2 epochs, $\delta \rho = 100$,  learning rate $1e^{-3}$ 
\end{itemize}
The neural network architecture is composed of $2$ non-linear Exponential Linear Unit (ELU) activated Full Connected layers of size $100$, followed by a linear full connected layer. Each layer with dropout. Dropout is not present in VCL/VCLC since not implemented in the original work. Training loss is the Cross Entropy. We tested also with the feature extraction layer separated, but with no advantage, since the test set-up only consist of one task. 

{\color{black}
The IRMv1, IRMG and ERM methods, similarly to the other methods, are trained sequentially as data from each new environment arrives. The Continual Learning methods are allowed to have a limited memory of samples from previous environments. 
}



{\color{black}
\subsection{Hyper-parameters of environment inference for continual invariant learning}
We list below the values of hyper-parameters in EIIL for continual invariant learning: \\
$2$ layers, \\
$390$ hidden neurons,\\
$501$ epochs,\\
l2 regularizer weight: $0.00110794568$,\\
learning rate:  $0.0004898536566546834$,\\
numbber of runs: $10$,\\
number of EIIL iterations: $10'000$,\\
number Monte Carlo evaluations: $3$,\\
penalty anneal iterations: $190$,\\
penalty weight: $191257.18613115902$,\\
prior weight: $1e^{-6}$

\subsubsection{Synthetic Dataset}
\begin{table}
\centering
	\caption{Mean accuracy (over $5$ runs) on Synthetic Dataset (\cite{arjovsky2019invariant},\cite{creager2020environment}). BIRM refers to our bilevel objective Eq. (8) optimized with ADMM.}
	\label{tab:SYNTH_EIIL}
\footnotesize
{\color{black}
\begin{tabular}{rll}
\toprule
& Causal MSE & Noncausal MSE  \\
\midrule
ERM	& $0.827   \pm  0.016  $ &	$ 0.824   \pm 	 0.015 $  \\
ICP	& $1.000  \pm   0.000  $  & 	$ 0.756   	 \pm  0.423 $  \\
IRM	& $0.280   	\pm  0.006  $ &	 $0.290   \pm 	 0.009  $ \\
\textbf{BIRM} &	 $0.183   \pm 	 0.005 $ & 	$ {\bf 0.184 }   \pm 	 0.002  $ \\
EIIL(IRM) &	  ${\bf0.180}  \pm  	 0.026 $  & $	 0.188   \pm 	 0.033 $  \\
 \bottomrule
\end{tabular}
}
\end{table}
The Synthetic Dataset is described in (\cite{arjovsky2019invariant}) for testing IRM and it is defined by a Structural Causal Model (\cite{pearl2009causality}), where a variable $y \in \R^{N}$ is generated by $x_1 \in \R^{N}$, while $x_2 \in \R^{N}$ is generated by $y$. The observed variable is $x = (x_1,x_2)$. The structural equations are 
\begin{eqnarray}
    x_1 &=& \epsilon_1,~ \epsilon_1 \sim N(0,\sigma_1^2) \\
    y &=& x_1+\epsilon_y, ~ \epsilon_y \sim N(0,\sigma_e^2) \\
    x_1 &=& y + \epsilon_2,~ \epsilon_2 \sim N(0,1) 
\end{eqnarray}
with $\sigma_1$ fixed and $\sigma_e$ dependent on the environment. We compared with ERM, IRM (\cite{arjovsky2019invariant}), IPC (Invariant Prediction), which is the method proposed in \cite{peters2015causal}, and EIIL ( \cite{creager2020environment}). We use a similar set up of \cite{creager2020environment}, with $N=4$. The invariant model is given by $w = (1,0)$.

}

\end{document}